\newtheorem{theorem}{Theorem}
\newtheorem{proposition}{Proposition}
\newcommand{\vit}{\textrm{ViT}\xspace}
\title{Single GPU Task Adaptation of Pathology Foundation Models for Whole Slide Image Analysis}
\author[1,*]{Neeraj Kumar}
\author[1]{Swaraj Nanda}
\author[1]{Siddharth Singi}
\author[1]{Jamal Benhamida}
\author[1]{David Kim}
\author[1]{Jie-Fu Chen}
\author[1]{Amir Momeni-Boroujeni}
\author[1]{Gregory M. Goldgof}
\author[2,3]{Gabriele Campanella}
\author[1]{Chad Vanderbilt}
\affil[1]{Department of Pathology and Laboratory Medicine, Memorial Sloan Kettering Cancer Center, New York, NY 10065, United States}
\affil[2]{Windreich Department of AI and Human Health, Icahn School of Medicine at Mount Sinai, New York, NY 10029, United States}
\affil[3]{Hasso Platner Institute at Mount Sinai, Icahn School of Medicine at Mount Sinai, New York, NY 10029, United States}
\affil[*]{\texttt{kumarn6@mskcc.org}}
\begin{document}

\maketitle

\begin{abstract}
Pathology foundation models (PFMs) have emerged as powerful tools for analyzing whole slide images (WSIs). However, adapting these pretrained PFMs for specific clinical tasks presents considerable challenges, primarily due to the availability of only weak (WSI-level) labels for gigapixel images, necessitating multiple instance learning (MIL) paradigm for effective WSI analysis. This paper proposes a novel approach for single-GPU \textbf{T}ask \textbf{A}daptation of \textbf{PFM}s (TAPFM) that uses vision transformer (\vit) attention for MIL aggregation while optimizing both for feature representations and attention weights. The proposed approach maintains separate computational graphs for MIL aggregator and the PFM to create stable training dynamics that align with downstream task objectives during end-to-end adaptation. Evaluated on mutation prediction tasks for bladder cancer and lung adenocarcinoma across institutional and TCGA cohorts, TAPFM consistently outperforms conventional approaches, with H-Optimus-0 (TAPFM) outperforming the benchmarks. TAPFM effectively handles multi-label classification of actionable mutations as well. Thus, TAPFM makes adaptation of powerful pre-trained PFMs practical on standard hardware for various clinical applications.
\end{abstract}

\section{Introduction}
Hematoxylin and Eosin (H\&E) staining is the most common slide preparation method in pathology, used for visualizing tissue architecture and cellular details for cancer diagnosis. Whole slide images (WSIs) serve as high-resolution digital representations of these tissue slides, commonly scanned at either $20\times$ or $40\times$ optical magnification that captures $0.50\mu^{2}$ or $0.25\mu^{2}$ of tissue per pixel, respectively. WSIs form the basis of computational pathology that employs machine learning (ML) and computer vision techniques for digital cancer assessment~\cite{song2023artificial}. Due to memory constraints preventing direct processing of gigapixel WSIs and the availability of only slide-level labels for clinical tasks, WSI processing adopts a multiple instance learning (MIL) approach. For MIL, a WSI is represented as a bag of smaller tiles (e.g., $224\times224\times3$) that are processed through neural networks to extract features, then aggregated to generate predictions using only bag-level labels during training~\cite{ilse2018attention,campanella2019clinical} as shown in Figure~\ref{fig:wsi_processing}.


Computational pathology has experienced a paradigm shift with the introduction of pathology foundation models (PFMs), which learn powerful representations from large collections of WSIs through self-supervised pre-training of vision transformers (\vit{}s)~\cite{dosovitskiy2021vit, chen2024uni, vorontsov2024virchow, zimmermann2024virchow2,xu2024gigapath, bioptimus2024}. For specific downstream applications such as gene mutation prediction, survival analysis, and treatment response estimation, existing methods typically use these PFMs as fixed feature extractors and train separate MIL aggregators to generate slide-level predictions~\cite{ilse2018attention, lu2021data}. The fixed-feature approach fails to adapt PFM parameters to the specific downstream task, potentially limiting performance~\cite{mahmood2025benchmarking,alfasly2025validation,campanella2025clinicalbenchmark}. To address this limitation, we propose a novel \textbf{T}ask \textbf{A}daptation of \textbf{P}athology \textbf{F}oundation \textbf{M}odels (TAPFM) approach that: (1) leverages \vit{}'s internal attention mechanism for MIL aggregation, (2) maintains separate computational graphs for PFM and MIL parameter updates with a dual-loss mechanism on a single GPU, and (3) seamlessly integrates with popular PFMs to improve their performance on clinically relevant tasks.

\begin{figure}[ht]
\centering
\includegraphics[width=\textwidth, keepaspectratio]{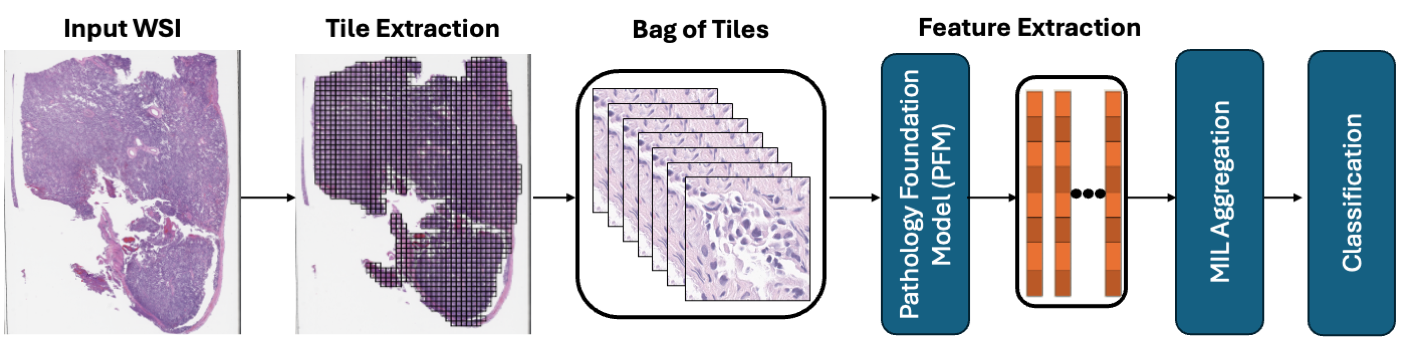}f
\caption{\textbf{WSI Processing Pipeline} -- A representative H\&E stained WSI scanned at $40\times$ optical resolution with $100,000\times125,000$ RGB pixels. For efficient processing, manageable sized tiles (e.g. $224\times224\times3$) shown in black boxes are usually extracted from the tissue region. Bag of tiles obtained from a WSI are then passed through a pathology foundation model to obtain feature representations (one feature vector per tile) which are then combined using multiple instance learning methods to compute a bag feature vector (one per WSI) that is used for downstream tasks such as binary classification (e.g. gene mutation prediction: 0 for absence and 1 for presence of mutation).
}
\label{fig:wsi_processing}
\end{figure}



\section{Related Work}

\subsection{Pathology Foundation Models (PFMs)}


CTransPath \cite{wang2022ctranspath} established an early benchmark by training a hybrid convolutional-transformer architecture on 32,220 WSIs across 25 anatomic sites. HIPT \cite{chen2022hipt} and REMEDIS \cite{azizi2023remedis} explored different architectural approaches with ViT-S (DINO~\cite{caron2021dino}) and ResNet-50 (SimCLR) respectively. Phikon \cite{filiot2023phikon} demonstrated the efficacy of ViT-L trained with iBOT on TCGA slides, while UNI \cite{chen2024uni} significantly expanded scale with its ViT-H architecture trained via DINOv2~\cite{oquab2024dinov2} on 100,000 slides across 20 tissue types. Subsequent models pushed boundaries further with Virchow \cite{vorontsov2024virchow} exploring a ViT-huge model trained on 1.5 million WSIs. This trend toward increased scale continued with Prov-GigaPath \cite{xu2024gigapath} processing 1.3 billion tiles from 171,189 WSIs spanning 31 tissue types and Virchow2 \cite{zimmermann2024virchow2} scaling to 1.7 billion tiles from 3.1 million slides across multiple magnifications. H-optimus-0 \cite{bioptimus2024} leveraged ViT-giant architecture trained on hundreds of millions of tiles from over 500,000 WSIs. Several approaches have explored multimodality, including vision-language models (CONCH \cite{lu2024conch}, PRISM \cite{shaikovski2024prism}, MUSK \cite{xiang2025musk}) and vision-genomics integration (Orpheus~\cite{boehm2025multimodal}), expanding PFMs beyond visual representation learning. Despite this architectural diversity, vision-only models including UNI~\cite{chen2024uni}, GigaPath~\cite{xu2024gigapath}, and H-optimus-0~\cite{bioptimus2024} have demonstrated superior performance on clinically relevant tasks such as cancer diagnosis, mutation prediction, and treatment response estimation~\cite{campanella2025clinicalbenchmark}. The dominant architecture across these PFMs remains \vit{}s~\cite{dosovitskiy2021vit} trained through self-supervised learning, predominantly using DINOv2~\cite{oquab2024dinov2}, on diverse WSI data. \emph{This paper specifically focuses on \vit based PFMs that utilize only pathology images as input.}

\subsection{Multiple Instance Learning (MIL) in Computational Pathology}
MIL methods for WSI analysis have evolved from attention-based mechanisms to spatial-aware architectures. Early MIL approaches used simple aggregation operations such as mean or max pooling to combine tile-level features \cite{campanella2019clinical}. A significant advancement came with attention-based MIL (ABMIL)~\cite{ilse2018attention}, which learns attention weights to selectively focus on diagnostically relevant tiles. CLAM~\cite{lu2021data} extended attention based MIL for multi-class classification. DSMIL \cite{li2021dual} introduced a dual-stream approach coupling max-pooling with attention scoring. VarMIL \cite{schirris2022} incorporated variance modeling to capture tissue heterogeneity while maintaining computational efficiency. Spatially aware MIL methods have also emerged to capture relationships between WSI tiles. TransMIL \cite{shao2021transmil} leveraged transformer architectures with positional encoding, while graph-based approaches like PatchGCN \cite{chen2022hipt} represent tiles as nodes in a graph structure based on physical adjacency. Graph transformer processing (GTP)~\cite{zheng2022graph} further refined this approach by combining graph structures with attention mechanisms. Despite architectural advances, benchmarking studies reveal that performance depends heavily on the specific clinical task and the quality of input embeddings, with no single aggregation method consistently outperforming others across all applications~\cite{chen2024milbenchmarking}.

\subsection{Task Adaptation of PFMs}
MIL methods usually rely on PMFs as fixed feature extractors, creating a disconnect between representation learning and task-specific adaptation for WSI analysis. Li et al.~\cite{li2023task} proposed an Information Bottleneck-based fine-tuning approach that addresses computational constraints through instance sparsification on smaller backbone models (ResNet-50).  The multiple forward passes required by the IB approach make it computationally infeasible for modern large-scale PFMs on single GPU systems due to memory constraints. While recent approaches have attempted to avoid multiple forward passes through end-to-end fine-tuning of large-scale PFMs~\cite{campanella2024finetuning, kumar2024demo}, these methods typically require substantial computational resources spanning tens of GPUs. To the best of our knowledge, no existing approach has leveraged transformer's self attention mechanism for MIL aggregation and task adaptation of large-scale PFMs on a single GPU for downstream clinical applications, while addressing the optimization challenges that arise when jointly training foundation models and MIL aggregators.




\section{Methodology}


\subsection{Problem Formulation}
Let $X = \{x_1, x_2, ..., x_K\}$ represent an H\&E stained WSI composed of $K$ non-overlapping tiles, where each tile $x_i \in \mathbb{R}^{H \times W \times C}$ corresponds to the tissue region extracted from the WSI. For MIL, a cancer patient's WSI has a bag-level label $y \in \{0, 1\}$ for binary classification tasks (e.g., presence or absence of a gene mutation in a patient), but no tile-level annotations. Our goal is to adapt a pretrained PFM $f_\theta$, parametrized by $\theta$, to extract tile-level features that are relevant for a downstream task. A \vit~\cite{dosovitskiy2021vit} based PFM maps each tile to a feature vector $z_i = f_\theta(x_i) \in \mathbb{R}^D$ \cite{chen2024uni, xu2024gigapath, bioptimus2024} by first diving each input tile into a grid of $N$ non-overlapping patches (tokens) of size $P \times P$, with an additional learnable CLS token prepended to the sequence. These tokens are linearly projected and embedded with position information before being processed through multiple self-attention layers to compute tile feature representations.

\subsection{Attention-Based Aggregation}
\label{sec:aggregator}
The CLS token attends to all other tokens, to compute attention weights that indicate the importance of each token for the feature representation. We propose to leverage these attention weights for MIL aggregation. For a WSI with $K$ tiles, let $\mathbf{Z} = [z_1^T, z_2^T, \ldots, z_K^T]^T \in \mathbb{R}^{K \times D}$ denote the feature matrix, where each row $z_i \in \mathbb{R}^D$ is the feature vector (CLS token embedding) for tile $i$. Similarly, let $\mathbf{a} = [a_1, a_2, \ldots, a_K]^T \in \mathbb{R}^K$ denote the vector of attention weights derived from the \vit. For each tile $x_i$, the attention weight $a_i$ is computed as:
\begin{equation}
a_i = \frac{1}{H} \sum_{h=1}^{H} \frac{1}{N} \sum_{j=1}^{N} A^h_{cls,j}
\end{equation}
where $A^h_{cls,j}$ is the attention weight from the CLS token to the $j^{th}$ token in the $h^{th}$ attention head while $H$ and $N$ are the numbers of attention heads and tokens, respectively. The proposed approach maintains a separate computation graph for the aggregator by detaching tile features and attention weights from the PFM's computation graph. The detached attention weights undergo min-max scaling to the range [0,1] followed by softmax normalization, ensuring they form a proper probability distribution for tile importance scoring to compute the bag representation $Z$ as:
\begin{equation}
Z = \mathbf{Z}^T\mathbf{a} = \sum_{i=1}^{K} a_i z_i
\end{equation}
where both $\mathbf{Z}$ and $\mathbf{a}$ are detached from the PFM's computation graph. This ensures that the gradients from the classification loss only flow through the aggregator parameters while keeping the PFM's parameters fixed during this stage of optimization. The bag representation $Z$ is then passed through a linear classifier to predict the bag-level label:
\begin{equation}
\hat{y} = \sigma(W Z + b)
\end{equation}

where $\sigma$ is the sigmoid activation while $W \in \mathbb{R}^{1 \times D}$ and $b \in \mathbb{R}$ are aggregator parameters  $\theta_{agg}: \{W,b\}$, that are learned during backpropagation with weighted cross-entropy loss:
\begin{equation}
\mathcal{L}_{agg}(y, \hat{y}) = -w_y [y \log(\hat{y}) + (1 - y) \log(1 - \hat{y})]
\label{eq:aggloss}
\end{equation}
where $w_y$ is the weight assigned to class $y$ to handle class imbalance.




\subsection{PFM Adaptation}
\label{sec:finetuning}
Instead of employing conventional end-to-end backpropagation, we propose to detach gradients from the aggregator's computation graph to formulate a dedicated loss function for PFM adaptation.


\noindent \textbf {Feature Alignment Loss:} Let us define $\mathbf{G}_z = [g_{z_1}^T, g_{z_2}^T, \ldots, g_{z_K}^T]^T \in \mathbb{R}^{K \times D}$ as the feature gradient matrix where $k^{th}$ row contains the gradient of the aggregator loss (equation~\ref{eq:aggloss}) with respect to the corresponding tile's feature vector. During backpropagation through the aggregator, the gradients with respect to each feature vector are automatically computed (by chain rule) based on their contribution to the bag representation as $g_{z_i} = \frac{\partial \mathcal{L}_{agg}}{\partial z_i} = a_i \frac{\partial \mathcal{L}_{agg}}{\partial Z}$. We propose to detach the feature gradients from the aggregator's computation graph to compute the feature alignment loss as:


\begin{equation}
\mathcal{L}_{feature} = -\text{tr}(\mathbf{Z}\mathbf{G}_z^T) = \sum_{i=1}^{K} \langle z_i, g{z_i} \rangle = \sum_{i=1}^{K} \sum_{d=1}^{D} z_{i,d} \times g_{z_i,d}
\label{eq:featureloss}
\end{equation}

This loss guides feature vectors to move in the direction that reduces the classification loss and can be interpreted as a first-order approximation of the effect of feature changes on the aggregator loss.

\noindent \textbf{Attention Loss:} Let us define the gradient of the aggregator loss with respect to attention weights as $\mathbf{g}_a = [g_{a_1}, g_{a_2}, \ldots, g_{a_K}]^T \in \mathbb{R}^K$. Similar to the feature gradients, the gradient of the aggregator loss with respect to each attention weight is automatically computed by the chain rule as $g_{a_i} = \frac{\partial \mathcal{L}_{agg}}{\partial a_i} = \langle z_i, \frac{\partial \mathcal{L}_{agg}}{\partial Z} \rangle$. We propose to compute the attention loss using the detached attention gradient as:

\begin{equation}
\mathcal{L}_{attention} = \mathbf{a}^T\mathbf{g}_a = \sum_{i=1}^{K} a_i \cdot g_{a_i}
\label{eq:attentionloss}
\end{equation}

This loss encourages attention weights to adjust based on the informativeness of each tile for the downstream task, increasing (or decreasing) weights for informative (or uninformative) tiles.

    \noindent \textbf{Task Adaptation Loss (TAL):} For PFM updates, TAL combines the feature and the attention loss:
\begin{equation}
\mathcal{L}_{PFM} = \mathcal{L}_{feature} + \lambda \mathcal{L}_{attention} 
\label{eq:pfmloss}
\end{equation}

where $\lambda$ is the hyperparameter that controls the relative importance of the attention loss. The PFM parameters, $\theta_{PFM}$, are then updated with the backpropagation using its own loss: $\mathcal{L}_{PFM}$. The training procedure for the proposed approach is presented in Algorithm \ref{alg:TAPFM} with illustraion shown in Figure~\ref{fig:TAPFM_illustration} and its implementation is available at \href{https://github.com/pfmadaptation/tapfm}{https://github.com/pfmadaptation/tapfm}.

\begin{algorithm}
\caption{\textbf{T}ask \textbf{A}daptation of \textbf{P}athology \textbf{F}oundation \textbf{M}odels (TAPFM)}
\label{alg:TAPFM}
\begin{algorithmic}[1]
\STATE \textbf{Input:} Dataset $\mathcal{D}$ of WSIs and labels $\{(X_b, y_b)\}_{b=1}^{M}$, pretrained pathology foundation model $f_{\theta_{PFM}}$, MIL aggregator $f_{\theta_{agg}}$, learning rates ($\eta_{agg}$ and $\eta_{PFM}$)

\FOR{epoch = 1 to num\_epochs}
    \FOR{each $(X_b, y_b)$ in $\mathcal{D}$}
        \STATE Extract K tiles $\{x_1, x_2, ..., x_K\}$ from $X_b$ as tensor $\mathbf{X} \in \mathbb{R}^{H\times W \times C\times K}$ 
        \STATE \textcolor{blue}{// PFM forward pass}
        \STATE $(\mathbf{Z}, \mathbf{A}) = f_{\theta}(\mathbf{X})$ \COMMENT{Extract last layer CLS feature and attention matrices}
        \STATE Compute $\mathbf{a} = [a_1, a_2, \ldots, a_K]^T \in \mathbb{R}^K$ where, $a_i = \frac{1}{H} \sum_{h=1}^{H} \frac{1}{N} \sum_{j=1}^{N} A^h_{cls,j}$ \COMMENT{Average CLS attention across heads and tokens}
%
%
        \STATE \textcolor{blue}{// Detach features and attention from PFM computation graph}
        \STATE $\mathbf{Z}_{detached} = \text{detach}(\mathbf{Z})$ \COMMENT{Detach feature matrix}
        \STATE $\mathbf{a}_{detached} = \text{detach}(\mathbf{a})$ \COMMENT{Detach attention vector}
        \STATE \textcolor{blue}{// Aggregator parameter update}
        \STATE $\mathbf{a}_{detached} = \text{softmax}(\text{minmax}(\mathbf{a}_{detached}))$ \COMMENT{Scale to [0,1] range and normalize}
        \STATE $Z = \mathbf{Z}^T_{detached}\mathbf{a}_{detached}$ \COMMENT{Compute Bag representation}

        \STATE Compute $\hat{y} = f_{\theta_{agg}}(z)$ and $\mathcal{L}_{agg}$ (equation~\ref{eq:aggloss})
        \STATE Backpropagate: $\theta_{agg} \leftarrow \theta_{agg} - \eta_{agg} \nabla_{\theta} \mathcal{L}_{agg}$   
    
        \STATE \textcolor{blue}{// Detach gradients from aggregator computation graph}
        \STATE $\mathbf{G}_z^{detached} = \text{detach}(\mathbf{G}_z)$ \COMMENT{Detach feature gradient matrix}
        \STATE $\mathbf{g}_a^{detached} = \text{detach}(\mathbf{g}_a)$ \COMMENT{Detach attention gradient vector}
        
        \STATE \textcolor{blue}{// PFM fine-tuning with detached gradients}
        \STATE Compute fine-tuning loss $\mathcal{L}_{PFM}$ (Section~\ref{sec:finetuning})

            \STATE Backpropagate: $\theta_{PFM} \leftarrow \theta_{PFM} - \eta_{PFM} \nabla_{\theta_{PFM}} \mathcal{L}_{PFM}$
    \ENDFOR
\ENDFOR
\RETURN Fine-tuned PFM ($f_{\theta_{PFM}}$) and trained MIL aggregator ($f_{\theta_{agg}}$)
\end{algorithmic}
\end{algorithm}

\begin{figure}[ht]
\centering
\includegraphics[width=\textwidth, keepaspectratio]{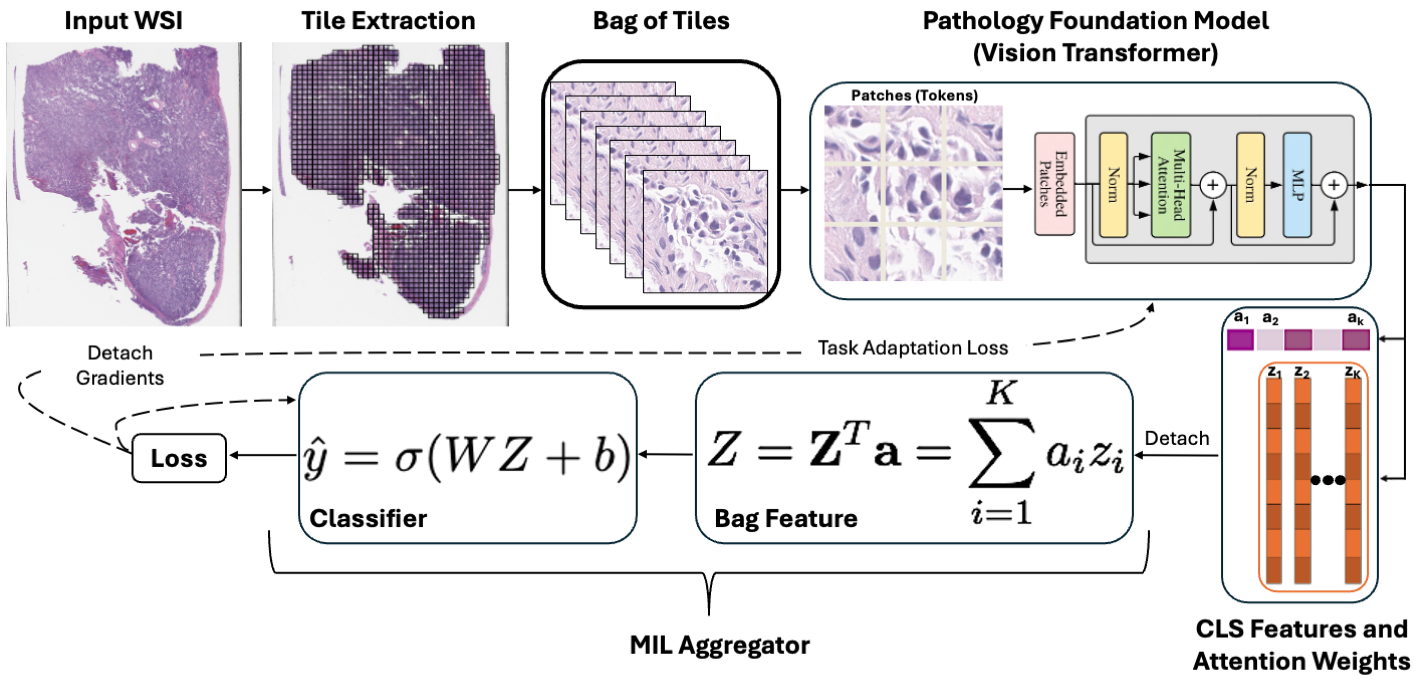}
\caption{\textbf{TAPFM Algorithm Overview} -- The proposed approach processes WSI tiles through a pathology foundation model (PFM) to extract features $\mathbf{Z}$ and attention weights $\mathbf{a}$ (solid arrows). These are detached from the PFM's computation graph and used by the MIL aggregator to compute bag-level predictions and aggregator loss $\mathcal{L}_{agg}$ (solid arrows). During backpropagation (dashed arrows), gradients from $\mathcal{L}_{agg}$ are detached to formulate the task adaptation loss $\mathcal{L}_{PFM}$ (equation~\ref{eq:pfmloss}) for fine-tuning PFM parameters, while aggregator parameters are updated using $\mathcal{L}_{agg}$. This dual-optimization approach with separate computational graphs enables stable task adaptation on a single GPU.}

\label{fig:TAPFM_illustration}
\end{figure}



\subsection{Theoretical Analysis}



The key innovation in our approach lies in the decoupling of the optimization process. In conventional end-to-end training with a unified computational graph, gradients flow through both the PFM and aggregator simultaneously:
{\small
\begin{equation}
\nabla_{\theta_{agg},\theta_{PFM}}\mathcal{L} = \left(\frac{\partial \mathcal{L}_{agg}}{\partial \hat{y}} \cdot \frac{\partial \hat{y}}{\partial \theta_{agg}},\frac{\partial \mathcal{L}_{agg}}{\partial \hat{y}} \cdot \frac{\partial \hat{y}}{\partial Z} \cdot \left(\frac{\partial Z}{\partial \mathbf{Z}} \cdot \frac{\partial \mathbf{Z}}{\partial \theta_{PFM}} + \frac{\partial Z}{\partial \mathbf{a}} \cdot \frac{\partial \mathbf{a}}{\partial \theta_{PFM}}\right) \right)
\end{equation}
}


TAPFM instead implements the following two-stage optimization:

{\small
\begin{equation}
\nabla_{\theta_{agg}}\mathcal{L}_{agg} = \frac{\partial \mathcal{L}_{agg}}{\partial \hat{y}} \cdot \frac{\partial \hat{y}}{\partial \theta_{agg}} \quad \textrm{(Stage 1: Aggregator Update)}
\end{equation}
}
{\small
\begin{align}
\scriptstyle
\nabla_{\theta_{PFM}}\mathcal{L}_{PFM} 
&= \text{detach}\left(\frac{\partial \mathcal{L}_{agg}}{\partial Z}\right) \cdot \Bigg(
    \frac{\partial Z}{\partial \mathbf{a}} \cdot \frac{\partial \mathbf{a}}{\partial \theta_{PFM}} + \frac{\partial Z}{\partial \mathbf{Z}} \cdot \frac{\partial \mathbf{Z}}{\partial \theta_{PFM}} 
\Bigg) \textrm{(Stage 2: PFM Update)}
\end{align}
}

This design resolves the circular dependency challenge in jointly optimizing PFM and MIL aggregator parameters by detaching gradients between optimization stages, that would otherwise create unstable training dynamics.

\begin{proposition}[Gradient Stabilization]

The TAPFM approach breaks the circular dependency at each iteration $t$ by enforcing:
{\small
\begin{equation}
\frac{\partial \mathcal{L}_{agg}}{\partial \theta_{agg}}|_{t} \propto g(\theta_{PFM_{t-1}}, \theta_{agg_{t-1}}) \quad \text{and} \quad \frac{\partial \mathcal{L}_{PFM}}{\partial \theta_{PFM}}|_{t} \propto f(\theta_{PFM_{t-1}}, \theta_{agg_{t}})
\end{equation}
}
resulting in more stable parameter trajectories than joint optimization.
\end{proposition}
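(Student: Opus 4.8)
The plan is to split the proposition into two parts and dispatch them in order: (i) verify the two claimed functional dependencies, which is bookkeeping over the \texttt{detach} operations in Algorithm~\ref{alg:TAPFM} together with the Stage~1 / Stage~2 gradient decompositions already displayed; and (ii) make ``more stable parameter trajectories'' precise as a statement about the one-step update map, and show that TAPFM's sequential (Gauss--Seidel-type) structure yields a non-larger — and generically strictly smaller — amplification of perturbations per step than the coupled joint update.

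\noindent\textbf{Step 1 (functional forms).} Fix iteration $t$ and trace the computation graph. The forward pass produces $(\mathbf{Z},\mathbf{a})=f_{\theta_{PFM_{t-1}}}(\mathbf{X})$, after which both are detached; hence $\mathcal{L}_{agg}$, being a function only of the min-max/softmax-normalized $\mathbf{a}_{detached}$, the detached $\mathbf{Z}_{detached}$, and $\theta_{agg_{t-1}}$, has $\nabla_{\theta_{agg}}\mathcal{L}_{agg}|_t$ depending only on $(\theta_{PFM_{t-1}},\theta_{agg_{t-1}})$; the ``$\propto$'' in the statement simply records that the aggregator step uses this direction scaled by $\eta_{agg}$. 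For Stage~2, the aggregator has already moved to $\theta_{agg_t}$, so the detached signal $\mathbf{G}_z^{detached},\mathbf{g}_a^{detached}$ (equivalently $\partial\mathcal{L}_{agg}/\partial Z$) is evaluated at $(\theta_{PFM_{t-1}},\theta_{agg_t})$, and contracting it against the still-live $\mathbf{Z},\mathbf{a}$ — which remain differentiable in $\theta_{PFM}$ through the forward graph built at $\theta_{PFM_{t-1}}$ — gives $\nabla_{\theta_{PFM}}\mathcal{L}_{PFM}|_t = f(\theta_{PFM_{t-1}},\theta_{agg_t})$. The $O(\eta_{agg})$ discrepancy between $\theta_{agg_t}$ and $\theta_{agg_{t-1}}$ caused by the exact placement of the backward call is immaterial to what follows.

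\noindent\textbf{Step 2 (stability via the linearized update).} Write the joint update as $\theta^{(t)}=\theta^{(t-1)}-\eta\,\nabla\mathcal{L}(\theta^{(t-1)})$ with $\theta=(\theta_{PFM},\theta_{agg})$; near a stationary point the error recursion is governed by $J_{\mathrm{joint}}=I-\eta H$, whose off-diagonal blocks $H_{PA},H_{AP}$ are exactly the circular dependency, entering through $\partial Z/\partial\mathbf{a}=\mathbf{Z}$ and $\partial Z/\partial\mathbf{Z}$ — both co-evolving with the block the other update is simultaneously moving. TAPFM replaces this by the composition $\Phi_{\mathrm{TAPFM}}=\Phi_{PFM}\circ\Phi_{agg}$, where $\Phi_{agg}$ is a gradient step on a block with the other frozen, and $\Phi_{PFM}$ is a gradient step on the linear surrogate $\mathcal{L}_{PFM}$ that treats $\partial\mathcal{L}_{agg}/\partial Z$ as a constant. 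Linearizing, $J_{\mathrm{TAPFM}}$ is precisely the Gauss--Seidel splitting of $I-\eta H$ against the Jacobi splitting $J_{\mathrm{joint}}$. I would then invoke: (a) the aggregator subproblem is convex (logistic regression on the fixed bag vector $Z$) and the PFM subproblem is linear, so below the standard descent-lemma step-size thresholds each of $\Phi_{agg},\Phi_{PFM}$ is individually non-expansive on its block; and (b) for this splitting the Gauss--Seidel spectral radius does not exceed the Jacobi one, giving $\rho(J_{\mathrm{TAPFM}})\le\rho(J_{\mathrm{joint}})$, strict whenever the cross-blocks are nonzero. A smaller spectral radius of the one-step map is the precise sense in which perturbations, and hence trajectory oscillations, are damped more strongly.

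\noindent\textbf{Main obstacle.} The delicate point is (b): Gauss--Seidel dominance over Jacobi in spectral radius is a theorem only under sign/definiteness structure (the relevant matrix an M-matrix, or symmetric positive definite with a consistent ordering), which a general non-convex deep-network Hessian need not possess globally. I would therefore either localize the formal statement to a neighborhood of an equilibrium where a quadratic model with the requisite structure applies, or — preferably — take as the assumption-free core the two facts that survive without it: the linearized PFM dynamics no longer contain the feedback term $(\partial Z/\partial\mathbf{a})(\partial\mathbf{a}/\partial\theta_{PFM})$ weighted by a co-evolving factor, because that factor is detached to a constant; and each stage is a genuine descent step on a well-posed convex/linear subproblem. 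The spectral-radius domination then follows as a corollary under an explicitly stated positive-definiteness assumption on the coupled Hessian. A minor secondary issue is the non-smoothness of min-max$+$softmax normalization at ties in $\mathbf{a}$; since that set has measure zero, a differentiability-almost-everywhere argument suffices, or one substitutes a smooth surrogate without changing the conclusion.
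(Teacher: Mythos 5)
Your Step~1 is essentially the paper's entire proof: the paper writes out the two update recursions with and without \texttt{detach}, observes that in the detached version $\theta_{PFM}$ is a constant during the aggregator step and $\theta_{agg_t}$ is a constant during the PFM step, and then simply \emph{asserts} that eliminating the circular dependency ``stabilizes training.'' No quantitative notion of stability is ever defined or established in the paper. Your Step~2 is therefore a genuinely different and more ambitious route: you formalize ``more stable'' as a spectral-radius comparison between the linearized one-step maps of the coupled update and the sequential detached update, casting them as Jacobi versus Gauss--Seidel splittings of $I-\eta H$. This buys an actual theorem where the paper has only a heuristic, and your honesty about the cost is exactly right --- Gauss--Seidel domination over Jacobi holds only under M-matrix or consistently-ordered SPD structure that a deep-network Hessian need not possess, so the strengthened claim must either be localized near an equilibrium or stated under an explicit positive-definiteness hypothesis. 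Your fallback ``assumption-free core'' (the detached factor no longer co-evolves, and each stage is a descent step on a convex logistic or linear surrogate) is in fact a faithful, slightly sharpened restatement of what the paper actually proves. Two smaller points in your favor: you correctly note that the gradients $\mathbf{G}_z,\mathbf{g}_a$ fed to $\mathcal{L}_{PFM}$ are computed in the backward pass at $\theta_{agg_{t-1}}$ even though the proposition writes $\theta_{agg_t}$ --- an $O(\eta_{agg})$ imprecision the paper glosses over --- and you flag the non-differentiability of the min-max normalization at ties, which the paper ignores entirely. In short, your proposal subsumes the paper's argument and identifies precisely where a rigorous version of the stability claim would need hypotheses the paper does not supply.
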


\begin{proof}
In joint optimization, the parameter updates create an implicit feedback loop:

{\small
\begin{align}
\theta_{PFM_{t}} &= \theta_{PFM_{t-1}} - \eta_{PFM} \nabla_{\theta_{PFM}}\mathcal{L}_{agg}(\theta_{PFM_{t-1}}, \theta_{agg_{t-1}})\\
\theta_{agg_{t}} &= \theta_{agg_{t-1}} - \eta_{agg} \nabla_{\theta_{agg}}\mathcal{L}_{agg}(\theta_{PFM_{t}}, \theta_{agg_{t-1}})
\end{align}
}

Note that $\theta_{agg_{t}}$ depends on $\theta_{PFM_{t}}$, which itself depends on $\theta_{agg_{t-1}}$. This creates a circular dependency where each parameter set is chasing a moving target. However, the proposed approach breaks this loop by detaching the gradient computation graphs:

{\small
\begin{align}
\theta_{agg_{t}} &= \theta_{agg_{t-1}} - \eta_{agg} \nabla_{\theta_{agg}}\mathcal{L}_{agg}(\text{detach}(\theta_{PFM_{t-1}}), \theta_{agg_{t-1}})\\
\theta_{PFM_{t}} &= \theta_{PFM_{t-1}} - \eta_{PFM} \nabla_{\theta_{PFM}}\mathcal{L}_{PFM}(\theta_{PFM_{t-1}}, \text{detach}(\theta_{agg_{t}}))
\end{align}
}

This detaching operation ensures that during aggregator optimization, $\theta_{PFM}$ is treated as a constant, and during PFM optimization, the updated $\theta_{agg}$ influences the loss but does not receive gradient updates. This effectively eliminates the circular dependency and stabilizes training.
\end{proof}

Additional properties of the proposed Algorithm~\ref{alg:TAPFM} are elaborated in the appendix: permutation invariance (Appendix~\ref{sec:perm_invariance}), computational and space complexity analysis(Appendix~\ref{app:comp_complexity} and \ref{app:spacecomplexity}), and prevention
of catastrophic forgetting during task adaptation (Appendix~\ref{app:catastrophic_forgetting}).

\subsection{Multi-label Classification}
\label{sec:mlclassification}
The proposed Algorith~\ref{alg:TAPFM} can be easily adapted for multi-label classification problems where each patient can be associated with multiple binary labels. Let $y = [y^1, y^2, ..., y^C]$ represent the ground truth vector where $y^j \in \{0,1\}$ indicates the presence or absence of the $j$-th mutation. The aggregator loss for each class is computed using the weighted cross-entropy loss as in Equation~\ref{eq:aggloss}. Note that the each output node in MIL aggregator is considered as class-specific binary predictor for multi-label classification. The total aggregator loss is then formulated as a weighted sum $\mathcal{L}_{agg} = \sum_{j=1}^{C} \alpha_j \mathcal{L}_{agg}^j$, where $\alpha_j$ is the weight assigned to class $j$, calculated as the inverse of the ratio of positive examples of that class to the total number of training instances. 



\section{Experiments}

The proposed TAPM approach is evaluated on clinically relevant mutation prediction tasks using institutional and public cohorts of bladder cancer (BLCA) and lung adenocarcinoma (LUAD) patients.

\textbf{Datasets}: The institutional dataset consists of H\&E WSIs of $2,030$ BLCA and $8,820$ LUAD patients collected during routine clinical care at *institute*, including both $20\times$ and $40\times$ ($\sim30$\% of all WSIs in each cohort) magnifications to reflect real-world data acquisition variability. WSIs from The Cancer Genome Atlas (TCGA) cohorts -- TCGA-BLCA (260 patients) and TCGA-LUAD (438 patients), all scanned at $40\times$ resolution, are exclusively used for external validation to assess generalizability of the proposed approach. Only one WSI per patient is used in all training and validation cohorts. 

For the binary classification task, TAPM is evaluated on two clinically relevant mutation prediction tasks: FGFR3 in BLCA and EGFR in LUAD. For multi-label classification, TAPM's ability to simultaneously predict four actionable mutations in LUAD patients is assessed: EGFR, KRAS, MET, and ALK. The prevalence rate of these mutations across institutional (TCGA) cohorts are: 16\% (14\%) for FGFR3 in BLCA, and in LUAD: 26\% (14\%) for EGFR, 27\% (35\%) for KRAS, 4\% (2\%) for MET, and 3\% (1\%) for ALK.



\textbf{Benchmarks}: State-of-the-art PFMs such as \textbf{UNI} \cite{chen2024uni}, \textbf{GigaPath} \cite{xu2024gigapath}, and \textbf{H-Optimus-0} \cite{bioptimus2024}. Motivated by prior findings that lightweight MIL models can attain clinical performance comparable to computationally expensive aggregators~\cite{chen2024milbenchmarking}, this work employs memory-efficient MIL methods on single-GPU systems: \textbf{ABMIL}~\cite{ilse2018attention}, \textbf{DSMIL}~\cite{li2021dual}, \textbf{CLAM}~\cite{lu2021data}, and \textbf{VarMIL}~\cite{schirris2022}. 

\textbf{Implementation Details}: All reported experiments in this paper are conducted using PyTorch 2.5.1 on a single NVIDIA H100 (80GB memory). Each WSI is processed at native resolution ($20\times$ or $40\times$) to extract non-overlapping 
tiles in a sliding window manner after filtering out the non-tissue regions with Otsu thresholding. For $20\times$ WSIs, tiles of size $224\times 224 \times 3$ pixels are extracted directly, while for $40\times$ images, tiles of size $448 \times 448 \times 3$ pixels are extracted and resized to $224 \times 224 \times 3$ pixels to maintain consistent spatial context. As detailed in the space complexity analysis (Appendix~\ref{app:spacecomplexity}), the memory requirements of the proposed TAPFM method scale quadratically with the number of tokens (patches) processed by \vit{}s. The $224\times 224$ tile size is selected as the optimal dimension that prevents out-of-memory errors while maximizing the contextual information captured per tile. During training, 300, 100, and 75 tiles per WSI per epoch are randomly sample without replacement for UNI, Gigapath, and H-Optimus-0 respectively -- the maximum number of tiles processable for each PFM on a single H100 GPU while maintaining end-to-end fine-tuning capability. \emph{At inference, all tiles obtained from a given WSI are used for downstream mutation prediction tasks.}

For the proposed TAPFM method $\lambda = 1.0$ is used for all experiments. The institutional data was stratified into training (80\%), validation (10\%), and test (10\%) sets maintaining patient level separation with balanced distribution of labels and resolutions across splits. Area Under the ROC Curve (AUC) is used as the primary evaluation metric. Model selection is performed using the validation set, with standard AUC for binary classification tasks and macro-average AUC for multi-label classification tasks determining the best checkpoint for further assessment on the testing sets.


For training, AdamW~\cite{loshchilov2017decoupled} with weight decay of 1e-4 is used as opitimzer, applying differential learning rates of 1e-6 and 1e-5 for PFM and aggregator parameters, respectively. Training data augmentation included random horizontal flips, random rotations (of 90°, 180°, or 270°), and Gaussian blur. A cosine annealing scheduler with warm restarts ($T_{0}=10, T_{mult}=2$) is also used for better convergence~\cite{loshchilov2016sgdr}. Each batch contained 1 WSI, and TAPFM is trained for 20 epochs while all other benchmarks are trained for 50 epochs, with the institutional validation set used to select the best-performing model for all evaluations on the insititutional and TCG testing sets.

\subsection{Results}
\label{sec:results}
Table~\ref{tab:pfm_mil_comparison} shows the performance comparison of three sets of models on the testing data: (1) fixed-PFM with trained MIL aggregators, (2) fine-tuned PFM and MIL aggregators (equivalent to setting $\lambda = 0$ in equation~\ref{eq:pfmloss} with external MIL methods), and (3) proposed TAPFM. It is evident that the proposed TAPFM approach outperforms the other benchmarks across both binary mutation prediction tasks. H-Optimus-0 (TAPFM) consistently achieves the best performance across both institutional and external TCGA testing cohorts, followed by Gigapath (TAPFM), indicates generalizability of the proposed approach.  Table~\ref{tab:multi_label_classification} extends TAPFM evaluation to the more challenging task of simultaneously predicting four actionable mutations in LUAD. H-Optimus-0 (TAPFM) consistently outperforms GigaPath (TAPFM) across all mutations, even for the rate MET and ALK mutations.


\subsection{Runtime Performance}
Training times for BLCA are 12 hours for UNI, 21 hours for Gigapath, and 24 hours for H-Optimus-0. For LUAD cases, training required 2 days 4 hours for UNI, 4 days 2 hours for Gigapath, and 4 days 6 hours for Hoptimus. Inference times per WSI are 4.85 minutes for UNI, 6.38 minutes for Gigapath, and 7.15 minutes for H-Optimus-0. These results confirm that TAPFM enables efficient PFM task adaptation on standard hardware making it suitable for clinical implementation.

\begin{table}[!htb]
\caption{Performance comparison of different PFM and MIL aggregation methods for binary classification tasks in terms of AUC. $N$ indicates the number of patients in the testing cohort. Best and second best models are in bold and underlined text, respectively.}
\label{tab:pfm_mil_comparison}
\centering
\resizebox{\textwidth}{!}{%
\begin{tabular}{lcccc}
\toprule
\multirow{2}{*}{\textbf{Model}} & \multicolumn{2}{c}{\textbf{BLCA FGFR3}} & \multicolumn{2}{c}{\textbf{LUAD EGFR}} \\

\cmidrule(lr){2-3} \cmidrule(lr){4-5}
& \textbf{Institutional Cohort} & \textbf{TCGA} & \textbf{Institutional Cohort} & \textbf{TCGA} \\
& \textbf{$(N = 194)$} & $(N = 260)$ & \textbf{$(N = 876)$} & $(N = 438)$ \\
\midrule
\multicolumn{5}{l}{\textit{Fixed-PFM with Trained MIL Aggregators}} \\
\midrule

UNI + DSMIL & 0.7876 & 0.7928 & 0.7352 & 0.7624 \\
UNI + CLAM & 0.7893 & 0.7912 & 0.7389 & 0.7645 \\
UNI + VarMIL & 0.7912 & 0.7948 & 0.7414 & 0.7663 \\
UNI + ABMIL & 0.7904 & 0.7962 & 0.7396 & 0.7718 \\
\midrule

GigaPath + DSMIL & 0.8232 & 0.8645 & 0.7657 & 0.8153 \\
GigaPath + CLAM & 0.8246 & 0.8673 & 0.7672 & 0.8164 \\
GigaPath + VarMIL & 0.8274 & 0.8689 & 0.7691 & 0.8192 \\
GigaPath + ABMIL & 0.8294 & 0.8712 & 0.7711 & 0.8205 \\
\midrule


H-Optimus-0 + DSMIL & 0.8365 & 0.8731 & 0.7694 & 0.8237 \\
H-Optimus-0 + CLAM & 0.8373 & 0.8752 & 0.7713 & 0.8253 \\
H-Optimus-0 + VarMIL & 0.8401 & 0.8774 & 0.7735 & 0.8282 \\
H-Optimus-0 + ABMIL & 0.8412 & 0.8786 & 0.7742 & 0.8295 \\
\midrule
\multicolumn{5}{l}{\textit{Fine-tuned (FT) PFM with MIL Aggregators}} \\
\midrule

UNI + DSMIL (FT) & 0.8132 & 0.8209 & 0.7526 & 0.7837 \\
UNI + CLAM (FT) & 0.8147 & 0.8223 & 0.7542 & 0.7865 \\
UNI + VarMIL (FT) & 0.8176 & 0.8252 & 0.7568 & 0.7894 \\
UNI + ABMIL (FT) & 0.8193 & 0.8237 & 0.7581 & 0.7922 \\
\midrule
GigaPath + DSMIL (FT) & 0.8372 & 0.8831 & 0.7986 & 0.8351 \\
GigaPath + CLAM (FT) & 0.8381 & 0.8847 & 0.8012 & 0.8377 \\
GigaPath + VarMIL (FT) & 0.8407 & 0.8873 & 0.8043 & 0.8408 \\
GigaPath + ABMIL (FT)  & 0.8393 & 0.8858 & 0.8074 & 0.8421 \\
\midrule

H-Optimus-0 + DSMIL (FT) & 0.8478 & 0.8843 & 0.8121 & 0.8479 \\
H-Optimus-0 + CLAM (FT) & 0.8491 & 0.8859 & 0.8143 & 0.8512 \\
H-Optimus-0 + VarMIL (FT) & 0.8526 & 0.8889 & 0.8167 & 0.8543 \\
H-Optimus-0 + ABMIL (FT) & 0.8512 & 0.8874 & 0.8189 & 0.8529 \\
\midrule
\multicolumn{5}{l}{\textit{Proposed TAPFM Method}} \\
\midrule
UNI (TAPFM) & 0.8415 & 0.8536 & 0.8175 & 0.8309 \\
Gigapath (TAPFM) & \underline{0.8564} & \underline{0.8994} & \underline{0.8426} & \underline{0.8528} \\
H-Optimus-0  (TAPFM) & \textbf{0.8647} & \textbf{0.9021} & \textbf{0.8491} & \textbf{0.8553} \\
\bottomrule
\end{tabular}%
}
\end{table}

\begin{table}[!htb]
\caption{Performance comparison on multi-label classification of actionable mutations in LUAD. Results are reported as AUC. Based on binary classification results (Table~\ref{tab:pfm_mil_comparison}), only the top performers are shown, as performance varied primarily by foundation model type.}
\label{tab:multi_label_classification}
\resizebox{\textwidth}{!}{%
\begin{tabular}{lccccc}
\toprule
\multirow{2}{*}{\textbf{Model}} & \multicolumn{4}{c}{\textbf{Institutional Cohort $(N = 876)$}} & \multirow{2}{*}{\textbf{Macro Average}} \\
\cmidrule(lr){2-5}
& \textbf{EGFR} & \textbf{KRAS} & \textbf{MET} & \textbf{ALK} & \\
\midrule
Gigapath (TAPFM) & 0.8595 & 0.8075 & 0.8350 & 0.8632 & 0.8439 \\
H-Optimus-0 (TAPFM) & 0.8665 & 0.8153 & 0.8420 & 0.8702 & \textbf{0.8510} \\

\midrule
\multicolumn{6}{c}{\textbf{TCGA $(N = 438)$}} \\
\midrule
GigaPath (TAPFM) & 0.8603 & 0.8147 & 0.8179 & 0.8540 & 0.8392  \\

H-Optimus-0 (TAPFM) & 0.8629 & 0.8226 & 0.8241 & 0.8563 & \textbf{0.8439} \\
\bottomrule
\end{tabular}%
}
\end{table}

\subsection{Convergence} 
For binary FGFR3 classification in BLCA, Figure~\ref{fig:fgfr3_blca_val} shows UNI reaching maximum validation performance by epoch 6 (AUC 0.8542), Gigapath by epoch 5 (AUC 0.8764), and H-Optimus-0 by epoch 7 (AUC 0.8960). Additionally, experiments on LUAD datasets (not shown) demonstrated that all PFMs converge within 4 epochs for binary classification tasks. For multi-label LUAD classification, convergence occurred at epoch 10 for Gigapath and epoch 11 for H-Optimus-0.


\begin{figure}[!htb]
    \centering
    \begin{subfigure}[b]{0.3\textwidth}
        \centering
        \includegraphics[width=\textwidth, keepaspectratio]{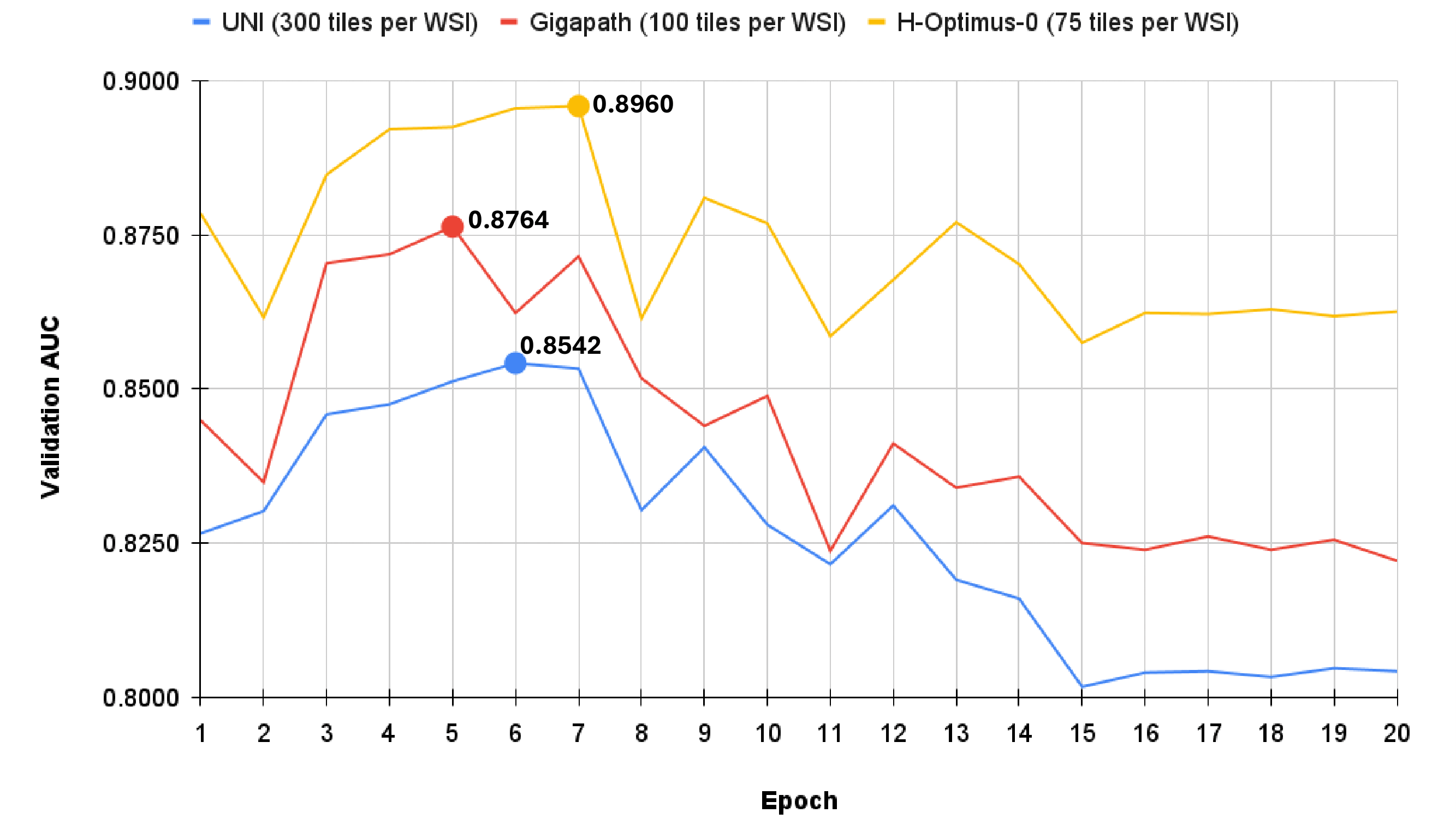}
        \caption{}
        \label{fig:fgfr3_blca_val}
    \end{subfigure}
    \hfill 
    \begin{subfigure}[b]{0.3\textwidth}
        \centering
        \includegraphics[width=\textwidth, keepaspectratio]{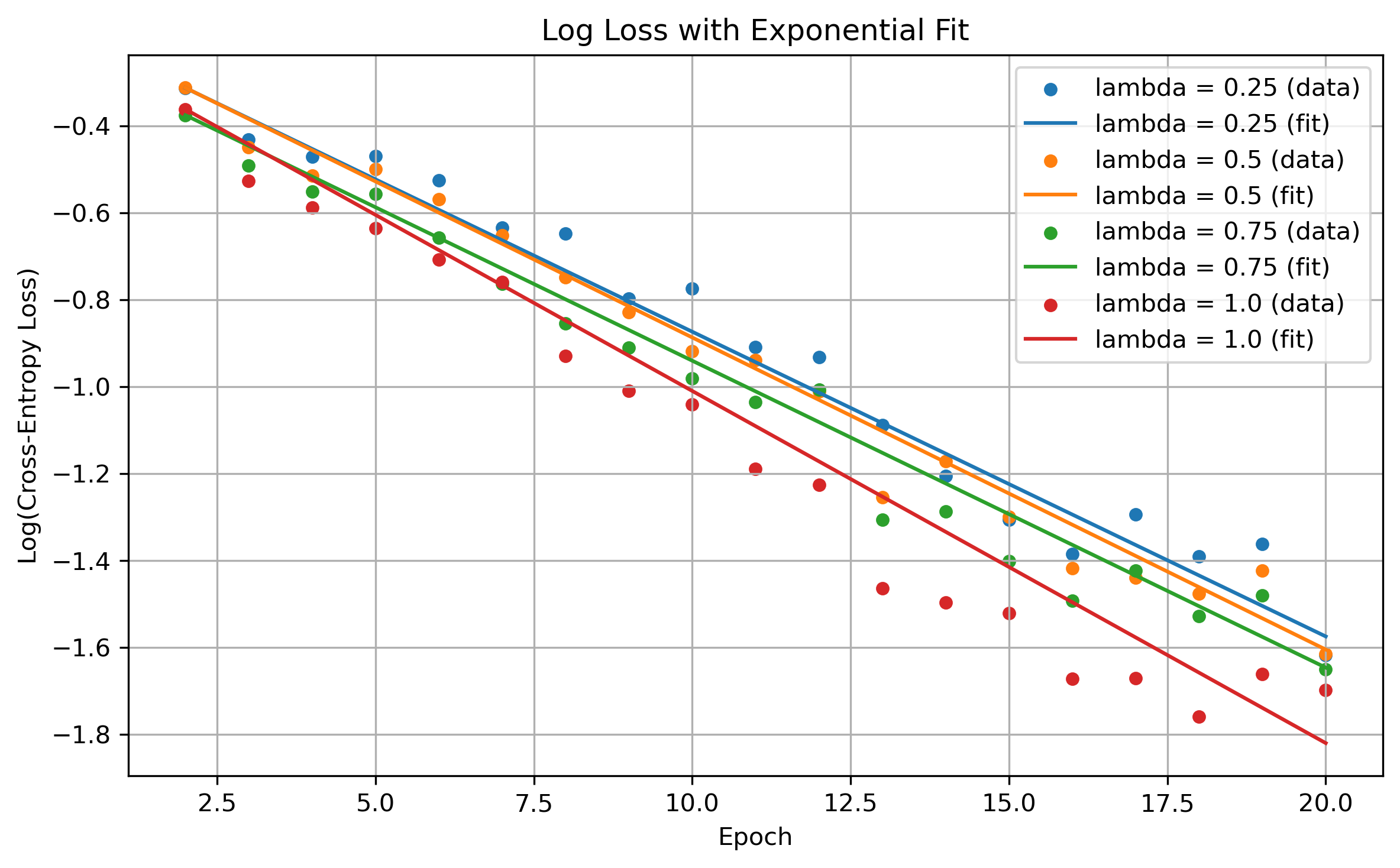}
        \caption{}
        \label{fig:lambda_variation}
    \end{subfigure}
    \hfill 
    \begin{subfigure}[b]{0.3\textwidth}
        \centering
        \includegraphics[width=\textwidth, keepaspectratio]{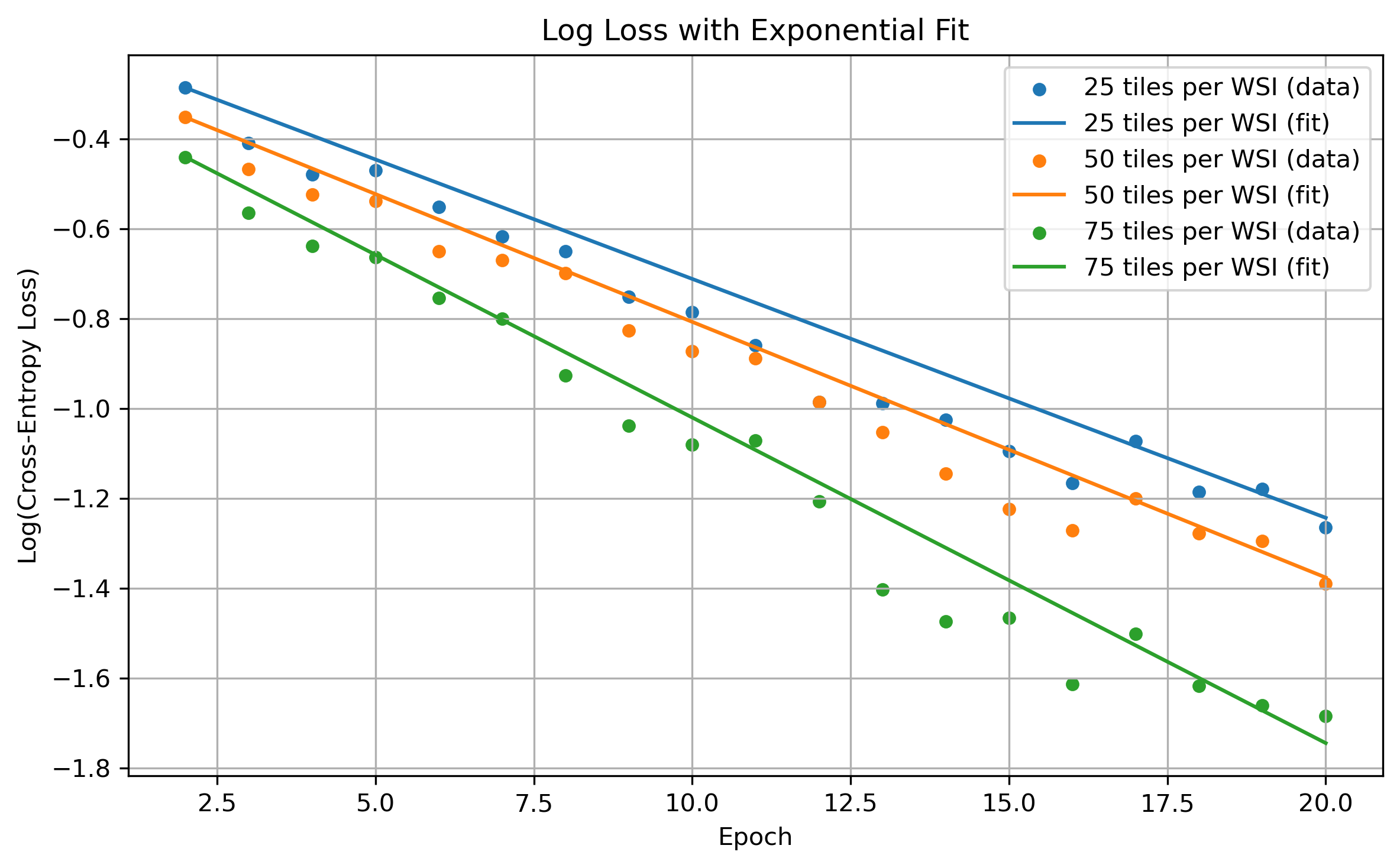}
        \caption{}
        \label{fig:training_loss}
    \end{subfigure}
    \caption{For bianry FGFR3 prediction task in BLCA -- (a) validation AUC trajectories of various TAPFM models, and the variation of log-transformed training loss of the H-Optimus-0 (TAPFM) model with (b) attention loss weighting ($\lambda$) and (c) tile sampling density.}
    \label{fig:ablation}
\end{figure}

\subsection{Ablation Studies}
\label{sec:ablation}
Systematic ablation studies investigate the influence of key hyperparameters -- attention loss weighting ($\lambda$ in equation~\ref{eq:pfmloss}) and number of tiles sampled per WSI per epoch -- on the performance of H-Optimus-0 (TAPFM) model for the binary FGFR3 prediction task in BLCA patients. 

\textbf{Lambda}: To evaluate the impact of the attention loss term in TAL (Equation~\ref{eq:pfmloss}), log-linear convergence fits of the form $\log \mathcal{L}_k = a + b\,k$ were computed over epochs $k = 2$ to $20$. Log-transformed loss fits are shown in Figure~\ref{fig:lambda_variation}, with raw curves in Appendix~\ref{fig:fgfr3_lambda_loss}. All values of $\lambda$ yielded consistent exponential decay, with convergence rates of $b = -0.0701$ ($R^2 = 0.9737$), $-0.0718$ ($R^2 = 0.9808$), $-0.0706$ ($R^2 = 0.9765$), and $-0.0810$ ($R^2 = 0.9693$) for $\lambda \in \{0.25, 0.5, 0.75, 1.0\}$, respectively. Although convergence behavior remained stable across this range, increasing $\lambda$ produced steeper decay and lower final training loss, with the best performance observed at $\lambda = 1.0$. These results suggest that stronger weighting of the TAL attention supervision improves training efficiency without compromising stability.

\textbf{Number of tiles}: Figure~\ref{fig:training_loss} shows that increasing the number of tiles per WSI consistently improved the performance of H-Optimus-0 (TAPFM) model (raw loss curves in Appendix~\ref{fig:fgfr3_ntiles_loss}). Another exponential decay model, $\log \mathcal{L}_k = a + b\,k$, was fit to training loss curves over epochs $k = 2$ to $20$, with estimated convergence rates of $b = -0.0532$ ($R^2 = 0.9731$), $-0.0569$ ($R^2 = 0.9811$), and $-0.0725$ ($R^2 = 0.9752$) for models trained with 25, 50, and 75 tiles per WSI, respectively. The 75-tile model converged by epoch 7 with a validation AUC of 0.8960, while the 50-tile model reached 0.8764 by the same epoch. The 25-tile variant (TAPFM H-Optimus-0) converged earlier—by epoch 5—but plateaued at a lower AUC of 0.8527. These findings indicate that denser tile sampling both accelerates convergence and improves performance on the institutional validation set.

Experimental analysis of incorporating a cosine regularization term (see Appendix~\ref{app:cosine_regularization}) for feature alignment loss in equation~\ref{eq:pfmloss} revealed no significant variation in FGFR3 prediction performance. Consequently, $\lambda = 1.0$ and the maximum number of tiles per WSI per epoch that could be accommodated on a single GPU for each PFM are used in all subsequent LUAD experiments

\section{Clinical Impact}
\label{sec:clinical_impact}
Because LUAD harbors a large number of drug‐targetable mutations, molecular diagnostics are especially valuable for guiding its targeted therapies~\cite{friedlaender_oncogenic_2024, chevallier_oncogenic_2021}. LUAD was the first tumor type to develop official guidelines recommending universal \emph{EGFR} and \emph{ALK} testing~\cite{lindeman_molecular_2013}, which were subsequently expanded in 2018 to include additional actionable genes when sufficient tissue is available~\cite{lindeman_updated_2018}. One of the principal challenges remains obtaining adequate DNA from small core biopsies to perform this extensive molecular workup~\cite{aisner_molecular_2012}. Although this paradigm has been embraced in LUAD, other malignancies such as BLCA have been sequenced far less frequently outside major centers despite the availability of FGFR3‐directed therapies~\cite{ascione_role_2023}. The ability to predict actionable mutations directly from H\&E-stained WSIs promises to streamline and economize current diagnostic workflows. By triaging or potentially obviating the need for costly and tissue-requiring molecular assays, TAPFM can help overcome the infrastructure and financial barriers that preclude precision oncology in resource-limited settings. Even when resources are available, TAPFM offers inference that can be performed as soon as the digital image is scanned. Therefore, clinicians can receive molecularly informed results within hours instead of days/weeks. This study demonstrates that a single model (TAPFM) can simultaneously detect multiple targetable mutations in LUAD, and that its performance generalizes from the originating institution to an independent TCGA cohort.

\section{Conclusion}
\label{sec:conclusion}
This work introduces TAPFM, a novel approach for adapting PFMs to specific clinical tasks by leveraging \vit{}'s attention mechanism for MIL aggregation and a detached dual-gradient approach for updating PFM parameters on a single GPU. TAPFM bridges the gap between self-supervised pretraining and supervised downstream adaptation in computational pathology, enabling more effective use of PFMs for clinical applications. The experimental results establish its effectiveness for clinically relevant mutation prediction tasks for BLCA (FGFR3) and LUAD (EGFR, KRAS, MET, ALK) patients while maintaining computational efficiency. Notably, TAPFM successfully tackles the challenging task of simultaneous prediction of four actionable mutations in LUAD patients, maintaining reasonable performance even for rare mutations like MET and ALK. Despite promising results, certain limitations of this work can be addressed in future studies. The approach shows potential for extension to additional clinical endpoints including survival analysis, recurrence prediction, and treatment response estimation. Investigations into which specific transformer layers benefit most from task adaptation could optimize the approach by selectively updating only those parameters. Additional external validation across multi-institutional cohorts with diverse scanning protocols and expanded biomarker panels would strengthen the clinical utility of the proposed approach. Implementations that scale to distributed training across multiple GPUs to increase the number of tiles processed per WSI during training may enhance TAPFM's generalization performance.




{\small
\bibliographystyle{unsrt}
\bibliography{references}

\begin{thebibliography}{10}

\bibitem{song2023artificial}
Andrew~H Song, Guillaume Jaume, Drew~FK Williamson, Ming~Y Lu, Anurag Vaidya, Tiffany~R Miller, and Faisal Mahmood.
\newblock Artificial intelligence for digital and computational pathology.
\newblock {\em Nature Reviews Bioengineering}, 1(12):930--949, 2023.

\bibitem{ilse2018attention}
Maximilian Ilse, Jakub Tomczak, and Max Welling.
\newblock Attention-based deep multiple instance learning.
\newblock In {\em International Conference on Machine Learning}, pages 2127--2136. PMLR, 2018.

\bibitem{campanella2019clinical}
Gabriele Campanella, Matthew~G. Hanna, Luke Geneslaw, Allen Miraflor, Vitor Werneck~Krauss Silva, Klaus~J. Busam, Edi Brogi, Victor~E. Reuter, David~S. Klimstra, and Thomas~J. Fuchs.
\newblock Clinical-grade computational pathology using weakly supervised deep learning on whole slide images.
\newblock {\em Nature Medicine}, 25(8):1301--1309, 2019.

\bibitem{dosovitskiy2021vit}
Alexey Dosovitskiy, Lucas Beyer, Alexander Kolesnikov, Dirk Weissenborn, Xiaohua Zhai, Thomas Unterthiner, Mostafa Dehghani, Matthias Minderer, Georg Heigold, Sylvain Gelly, Jakob Uszkoreit, and Neil Houlsby.
\newblock An image is worth 16x16 words: Transformers for image recognition at scale.
\newblock In {\em International Conference on Learning Representations}, 2021.

\bibitem{chen2024uni}
Richard~J Chen, Tong Ding, Ming~Y Lu, Drew~FK Williamson, Guillaume Jaume, Andrew~H Song, Bowen Chen, Andrew Zhang, Daniel Shao, Muhammad Shaban, et~al.
\newblock Towards a general-purpose foundation model for computational pathology.
\newblock {\em Nature Medicine}, 30(3):850--862, 2024.

\bibitem{vorontsov2024virchow}
Eugene Vorontsov, Alican Bozkurt, Adam Casson, George Shaikovski, Michal Zelechowski, Kristen Severson, Eric Zimmermann, James Hall, Neil Tenenholtz, Nicolo Fusi, et~al.
\newblock A foundation model for clinical-grade computational pathology and rare cancers detection.
\newblock {\em Nature medicine}, 30(10):2924--2935, 2024.

\bibitem{zimmermann2024virchow2}
Eric Zimmermann, Eugene Vorontsov, Julian Viret, Adam Casson, Michal Zelechowski, George Shaikovski, Neil Tenenholtz, James Hall, David Klimstra, Razik Yousfi, et~al.
\newblock Virchow2: Scaling self-supervised mixed magnification models in pathology.
\newblock {\em arXiv preprint arXiv:2408.00738}, 2024.

\bibitem{xu2024gigapath}
Hanwen Xu, Naoto Usuyama, Jaspreet Bagga, Sheng Zhang, Rajesh Rao, Tristan Naumann, Cliff Wong, Zelalem Gero, Javier Gonz{\'a}lez, Yu~Gu, et~al.
\newblock A whole-slide foundation model for digital pathology from real-world data.
\newblock {\em Nature}, 630(8015):181--188, 2024.

\bibitem{bioptimus2024}
Bioptimus.
\newblock H-optimus-0: A foundational model for histopathology.
\newblock \url{https://github.com/bioptimus/releases/tree/main/models/h-optimus/v0}, 2024.
\newblock Accessed: April 2025.

\bibitem{lu2021data}
M.~Y. Lu, D.~F.~K. Williamson, T.~Y. Chen, R.~J. Chen, M.~Barbieri, and F.~Mahmood.
\newblock Data-efficient and weakly supervised computational pathology on whole-slide images.
\newblock {\em Nature Biomedical Engineering}, 5(6):555--570, 2021.

\bibitem{mahmood2025benchmarking}
Faisal Mahmood.
\newblock A benchmarking crisis in biomedical machine learning.
\newblock {\em Nature Medicine}, pages 1--1, 2025.

\bibitem{alfasly2025validation}
Saghir Alfasly, Ghazal Alabtah, Sobhan Hemati, Krishna~Rani Kalari, Joaquin~J Garcia, and HR~Tizhoosh.
\newblock Validation of histopathology foundation models through whole slide image retrieval.
\newblock {\em Scientific Reports}, 15(1):3990, 2025.

\bibitem{campanella2025clinicalbenchmark}
Gabriele Campanella, Shengjia Chen, Manbir Singh, Ruchika Verma, Silke Muehlstedt, Jennifer Zeng, Aryeh Stock, Matt Croken, Brandon Veremis, Abdulkadir Elmas, et~al.
\newblock A clinical benchmark of public self-supervised pathology foundation models.
\newblock {\em Nature Communications}, 16(1):3640, 2025.

\bibitem{wang2022ctranspath}
Xiyue Wang, Sen Yang, Jun Zhang, Minghui Wang, Jing Zhang, Wei Yang, Junzhou Huang, and Xiao Han.
\newblock Transformer-based unsupervised contrastive learning for histopathological image classification.
\newblock {\em Medical image analysis}, 81:102559, 2022.

\bibitem{chen2022hipt}
Richard~J Chen, Chengkuan Chen, Yicong Li, Tiffany~Y Chen, Andrew~D Trister, Rahul~G Krishnan, and Faisal Mahmood.
\newblock Scaling vision transformers to gigapixel images via hierarchical self-supervised learning.
\newblock In {\em Proceedings of the IEEE/CVF conference on computer vision and pattern recognition}, pages 16144--16155, 2022.

\bibitem{azizi2023remedis}
Shekoofeh Azizi, Laura Culp, Jan Freyberg, Basil Mustafa, Sebastien Baur, Simon Kornblith, Ting Chen, Nenad Tomasev, Jovana Mitrovi{\'c}, Patricia Strachan, et~al.
\newblock Robust and data-efficient generalization of self-supervised machine learning for diagnostic imaging.
\newblock {\em Nature Biomedical Engineering}, 7(6):756--779, 2023.

\bibitem{caron2021dino}
Mathilde Caron, Hugo Touvron, Ishan Misra, Hervé Jégou, Julien Mairal, Piotr Bojanowski, and Armand Joulin.
\newblock Emerging properties in self-supervised vision transformers.
\newblock In {\em Proceedings of the IEEE/CVF International Conference on Computer Vision}, pages 9650--9660, 2021.

\bibitem{filiot2023phikon}
Alexandre Filiot, Ridouane Ghermi, Antoine Olivier, Paul Jacob, Lucas Fidon, Axel Camara, Alice Mac~Kain, Charlie Saillard, and Jean-Baptiste Schiratti.
\newblock Scaling self-supervised learning for histopathology with masked image modeling.
\newblock {\em medRxiv}, pages 2023--07, 2023.

\bibitem{oquab2024dinov2}
Maxime Oquab, Timoth{\'e}e Darcet, Th{\'e}o Moutakanni, Huy~V. Vo, Marc Szafraniec, Vasil Khalidov, Pierre Fernandez, Daniel HAZIZA, Francisco Massa, Alaaeldin El-Nouby, Mido Assran, Nicolas Ballas, Wojciech Galuba, Russell Howes, Po-Yao Huang, Shang-Wen Li, Ishan Misra, Michael Rabbat, Vasu Sharma, Gabriel Synnaeve, Hu~Xu, Herve Jegou, Julien Mairal, Patrick Labatut, Armand Joulin, and Piotr Bojanowski.
\newblock {DINO}v2: Learning robust visual features without supervision.
\newblock {\em Transactions on Machine Learning Research}, 2024.
\newblock Featured Certification.

\bibitem{lu2024conch}
Ming~Y Lu, Bowen Chen, Drew~FK Williamson, Richard~J Chen, Ivy Liang, Tong Ding, Guillaume Jaume, Igor Odintsov, Long~Phi Le, Georg Gerber, et~al.
\newblock A visual-language foundation model for computational pathology.
\newblock {\em Nature Medicine}, 30(3):863--874, 2024.

\bibitem{shaikovski2024prism}
George Shaikovski, Adam Casson, Kristen Severson, Eric Zimmermann, Yi~Kan Wang, Jeremy~D Kunz, Juan~A Retamero, Gerard Oakley, David Klimstra, Christopher Kanan, et~al.
\newblock Prism: A multi-modal generative foundation model for slide-level histopathology.
\newblock {\em arXiv preprint arXiv:2405.10254}, 2024.

\bibitem{xiang2025musk}
Jinxi Xiang, Xiyue Wang, Xiaoming Zhang, Yinghua Xi, Feyisope Eweje, Yijiang Chen, Yuchen Li, Colin Bergstrom, Matthew Gopaulchan, Ted Kim, et~al.
\newblock A vision--language foundation model for precision oncology.
\newblock {\em Nature}, pages 1--10, 2025.

\bibitem{boehm2025multimodal}
Kevin~M Boehm, Omar~SM El~Nahhas, Antonio Marra, Michele Waters, Justin Jee, Lior Braunstein, Nikolaus Schultz, Pier Selenica, Hannah~Y Wen, Britta Weigelt, et~al.
\newblock Multimodal histopathologic models stratify hormone receptor-positive early breast cancer.
\newblock {\em Nature Communications}, 16(1):2106, 2025.

\bibitem{li2021dual}
Bin Li, Yin Li, and Kevin~W Eliceiri.
\newblock Dual-stream multiple instance learning network for whole slide image classification with self-supervised contrastive learning.
\newblock In {\em Proceedings of the IEEE/CVF conference on computer vision and pattern recognition}, pages 14318--14328, 2021.

\bibitem{schirris2022}
Yoni Schirris, Efstratios Gavves, Iris Nederlof, Hugo~Mark Horlings, and Jonas Teuwen.
\newblock Deepsmile: contrastive self-supervised pre-training benefits msi and hrd classification directly from h\&e whole-slide images in colorectal and breast cancer.
\newblock {\em Medical Image Analysis}, 79:102464, 2022.

\bibitem{shao2021transmil}
Zhuchen Shao, Hao Bian, Yang Chen, Yifeng Wang, Jian Zhang, Xiangyang Ji, et~al.
\newblock Transmil: Transformer based correlated multiple instance learning for whole slide image classification.
\newblock {\em Advances in neural information processing systems}, 34:2136--2147, 2021.

\bibitem{zheng2022graph}
Yi~Zheng, Rushin~H. Gindra, Emily~J. Green, Eric~J. Burks, Margrit Betke, Jennifer~E. Beane, and Vijaya~B. Kolachalama.
\newblock A graph-transformer for whole slide image classification.
\newblock {\em IEEE Transactions on Medical Imaging}, 41(11):3003--3015, 2022.

\bibitem{chen2024milbenchmarking}
Shengjia Chen, Gabriele Campanella, Abdulkadir Elmas, Aryeh Stock, Jennifer Zeng, Alexandros~D. Polydorides, Adam~J. Schoenfeld, Kuan lin Huang, Jane Houldsworth, Chad Vanderbilt, and Thomas~J. Fuchs.
\newblock Benchmarking embedding aggregation methods in computational pathology: A clinical data perspective.
\newblock In {\em MICCAI Workshop on Computational Pathology with Multimodal Data (COMPAYL)}, 2024.

\bibitem{li2023task}
Honglin Li, Chenglu Zhu, Yunlong Zhang, Yuxuan Sun, Zhongyi Shui, Wenwei Kuang, Sunyi Zheng, and Lin Yang.
\newblock Task-specific fine-tuning via variational information bottleneck for weakly-supervised pathology whole slide image classification.
\newblock In {\em Proceedings of the IEEE/CVF conference on computer vision and pattern recognition}, pages 7454--7463, 2023.

\bibitem{campanella2024finetuning}
Gabriele Campanella, Eugene Fluder, Jennifer Zeng, Chad Vanderbilt, and Thomas~J Fuchs.
\newblock Beyond multiple instance learning: Full resolution all-in-memory end-to-end pathology slide modeling.
\newblock {\em arXiv preprint arXiv:2403.04865}, 2024.

\bibitem{kumar2024demo}
Neeraj Kumar, Swaraj Nanda, Siddharth Singi, Gabriele Campanella, Thomas Fuchs, and Chad Vanderbilt.
\newblock {DEMO}: A scalable artificial intelligence framework for rapid {EGFR} mutation screening in lung cancer.
\newblock In {\em GenAI for Health: Potential, Trust and Policy Compliance}, 2024.

\bibitem{loshchilov2017decoupled}
Ilya Loshchilov and Frank Hutter.
\newblock Decoupled weight decay regularization.
\newblock In {\em International Conference on Learning Representations}, 2019.

\bibitem{loshchilov2016sgdr}
Ilya Loshchilov and Frank Hutter.
\newblock Sgdr: Stochastic gradient descent with warm restarts.
\newblock In {\em International Conference on Learning Representations}, 2017.

\bibitem{friedlaender_oncogenic_2024}
Alex Friedlaender, Maurice Perol, Giuseppe~Luigi Banna, Kaushal Parikh, and Alfredo Addeo.
\newblock Oncogenic alterations in advanced nsclc: a molecular super-highway.
\newblock {\em Biomarker Research}, 12(1):24, 2024.

\bibitem{chevallier_oncogenic_2021}
Mathieu Chevallier, Maxime Borgeaud, Alfredo Addeo, and Alex Friedlaender.
\newblock Oncogenic driver mutations in non-small cell lung cancer: Past, present and future.
\newblock {\em World Journal of Clinical Oncology}, 12(4):217, 2021.

\bibitem{lindeman_molecular_2013}
Neal~I Lindeman, Philip~T Cagle, Mary~Beth Beasley, Dhananjay~Arun Chitale, Sanja Dacic, Giuseppe Giaccone, Robert~Brian Jenkins, David~J Kwiatkowski, Juan-Sebastian Saldivar, Jeremy Squire, et~al.
\newblock Molecular testing guideline for selection of lung cancer patients for egfr and alk tyrosine kinase inhibitors: guideline from the college of american pathologists, international association for the study of lung cancer, and association for molecular pathology.
\newblock {\em Journal of Thoracic Oncology}, 8(7):823--859, 2013.

\bibitem{lindeman_updated_2018}
Neal~I Lindeman, Philip~T Cagle, Dara~L Aisner, Maria~E Arcila, Mary~Beth Beasley, Eric~H Bernicker, Carol Colasacco, Sanja Dacic, Fred~R Hirsch, Keith Kerr, et~al.
\newblock Updated molecular testing guideline for the selection of lung cancer patients for treatment with targeted tyrosine kinase inhibitors: guideline from the college of american pathologists, the international association for the study of lung cancer, and the association for molecular pathology.
\newblock {\em Archives of pathology \& laboratory medicine}, 142(3):321--346, 2018.

\bibitem{aisner_molecular_2012}
Dara~L Aisner and Carrie~B Marshall.
\newblock Molecular pathology of non--small cell lung cancer: a practical guide.
\newblock {\em American journal of clinical pathology}, 138(3):332--346, 2012.

\bibitem{ascione_role_2023}
Claudia~Maria Ascione, Fabiana Napolitano, Daniela Esposito, Alberto Servetto, Stefania Belli, Antonio Santaniello, Sarah Scagliarini, Felice Crocetto, Roberto Bianco, and Luigi Formisano.
\newblock Role of fgfr3 in bladder cancer: Treatment landscape and future challenges.
\newblock {\em Cancer Treatment Reviews}, 115:102530, 2023.

\end{thebibliography}
}

\appendix

\appendix
\section{Theoretical Analysis Details}

\subsection{Permutation Invariance}
\label{sec:perm_invariance}
A crucial property for MIL models is permutation invariance, which ensures that the order of instances in a bag does not affect the final prediction and below it is shown that the proposed TAPFM methods hold the permutation invariance property.

\begin{theorem}[Permutation Invariance]
Let $X = \{x_1, x_2, ..., x_K\}$ be a bag of instances and $\pi$ be a permutation of indices $\{1, 2, ..., K\}$. The bag representation $Z$ computed by TAPFM is invariant to permutation, i.e., $Z(X) = Z(\{x_{\pi(1)}, x_{\pi(2)},\dots, x_{\pi(K)}\})$.
\end{theorem}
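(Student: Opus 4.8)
The plan is to show that each stage of the TAPFM pipeline commutes with a permutation of the tile indices, so that the composed bag representation $Z$ is permutation-invariant. First I would make the action of $\pi$ explicit on the intermediate objects: permuting the input tiles $\{x_1,\dots,x_K\}\mapsto\{x_{\pi(1)},\dots,x_{\pi(K)}\}$ permutes the rows of the feature matrix $\mathbf{Z}$ (since $z_i=f_\theta(x_i)$ is computed per tile, independently of the other tiles in the bag) and correspondingly permutes the entries of the raw attention vector $\mathbf{a}$ (since $a_i=\tfrac1H\sum_h\tfrac1N\sum_j A^h_{cls,j}$ depends only on the internal tokens of tile $x_i$, not on the other tiles). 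Formally, if $P_\pi\in\{0,1\}^{K\times K}$ is the permutation matrix associated with $\pi$, then $\mathbf{Z}\mapsto P_\pi\mathbf{Z}$ and $\mathbf{a}\mapsto P_\pi\mathbf{a}$.

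Next I would verify that the detach operation, the min-max scaling, and the softmax normalization all preserve this equivariance. Detaching is the identity on values, so it clearly commutes with $P_\pi$. Min-max scaling subtracts $\min_i a_i$ and divides by $\max_i a_i-\min_i a_i$; since $\min$ and $\max$ over the bag are themselves permutation-invariant scalars, the scaled vector transforms as $P_\pi$ applied to the original scaled vector. Softmax is applied coordinate-wise with a normalizing sum over all $K$ entries, and that sum is permutation-invariant, so $\mathrm{softmax}(P_\pi v)=P_\pi\,\mathrm{softmax}(v)$. Hence the normalized attention vector $\tilde{\mathbf{a}}$ satisfies $\tilde{\mathbf{a}}\mapsto P_\pi\tilde{\mathbf{a}}$.

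Then the key computation: the bag representation is $Z=\mathbf{Z}^T\tilde{\mathbf{a}}=\sum_{i=1}^K \tilde a_i z_i$. Under the permutation this becomes $(P_\pi\mathbf{Z})^T(P_\pi\tilde{\mathbf{a}})=\mathbf{Z}^T P_\pi^T P_\pi\tilde{\mathbf{a}}=\mathbf{Z}^T\tilde{\mathbf{a}}=Z$, using $P_\pi^T P_\pi=I$; equivalently, $\sum_i \tilde a_{\pi(i)} z_{\pi(i)}=\sum_i \tilde a_i z_i$ is just reindexing a finite sum. Since the downstream classifier $\hat y=\sigma(WZ+b)$ is a fixed function of $Z$, the prediction is invariant as well. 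I would conclude by noting that the training losses $\mathcal{L}_{agg}$, $\mathcal{L}_{feature}=\sum_i\langle z_i,g_{z_i}\rangle$, and $\mathcal{L}_{attention}=\sum_i a_i g_{a_i}$ are likewise sums over tiles and hence permutation-invariant, so the whole optimization is well-defined on unordered bags.

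The only real subtlety — and the step I would be most careful about — is justifying that $\mathbf{Z}$ and $\mathbf{a}$ are genuinely row/entry-permuted rather than arbitrarily scrambled: this rests on the fact that the ViT processes each tile independently (the CLS-to-patch attention and the output embedding for tile $i$ depend only on tile $i$'s pixels), so there is no cross-tile interaction at the PFM stage, and the only cross-tile coupling is through the bag-level scalars $\min_i a_i$, $\max_i a_i$, and $\sum_i e^{\cdot}$, all of which are symmetric functions. Once that independence is stated cleanly, the rest is bookkeeping with permutation matrices. I would present this as a short chain of equivariance lemmas (PFM stage, normalization stage, aggregation stage) followed by the one-line invariance conclusion.
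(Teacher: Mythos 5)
Your proposal is correct and follows essentially the same route as the paper's proof: permuting the bag acts as $\mathbf{Z}\mapsto\mathbf{P}_\pi\mathbf{Z}$ and $\mathbf{a}\mapsto\mathbf{P}_\pi\mathbf{a}$, and orthogonality of $\mathbf{P}_\pi$ (equivalently, reindexing the finite sum) gives $Z(X_\pi)=Z(X)$. Your additional checks that the per-tile ViT computation, the min-max scaling, and the softmax are permutation-equivariant are steps the paper leaves implicit, so they strengthen rather than change the argument.
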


\begin{proof}
Let $X = \{x_1, x_2, ..., x_K\}$ be a bag of instances and $\pi$ be a permutation of indices $\{1, 2, ..., K\}$. Let $X_\pi = \{x_{\pi(1)}, x_{\pi(2)}, ..., x_{\pi(K)}\}$ be the permuted bag.

Using matrix notation, let $\mathbf{Z} = [z_1^T, z_2^T, \ldots, z_K^T]^T \in \mathbb{R}^{K \times D}$ be the feature matrix and $\mathbf{a} = [a_1, a_2, \ldots, a_K]^T \in \mathbb{R}^K$ be the attention weight vector for the original ordering.

Let $\mathbf{P}_\pi \in \mathbb{R}^{K \times K}$ be the permutation matrix corresponding to $\pi$. Then, the feature matrix and attention vector for the permuted bag can be written as:
\begin{align}
\mathbf{Z}_\pi &= \mathbf{P}_\pi \mathbf{Z} \\
\mathbf{a}_\pi &= \mathbf{P}_\pi \mathbf{a}
\end{align}

The bag representation for the original ordering is:
\begin{equation}
Z(X) = \mathbf{Z}^T\mathbf{a} = \sum_{i=1}^{K} a_i z_i
\end{equation}

The bag representation for the permuted ordering is:
\begin{align}
Z(X_\pi) &= \mathbf{Z}_\pi^T\mathbf{a}_\pi \\
&= (\mathbf{P}_\pi \mathbf{Z})^T(\mathbf{P}_\pi \mathbf{a}) \\
&= \mathbf{Z}^T\mathbf{P}_\pi^T\mathbf{P}_\pi\mathbf{a} \\
&= \mathbf{Z}^T\mathbf{a} \\
&= Z(X)
\end{align}

The equality $\mathbf{P}_\pi^T\mathbf{P}_\pi = \mathbf{I}$ holds because permutation matrices are orthogonal. Therefore, $Z(X) = Z(X_\pi)$, which proves that the bag representation is permutation invariant.

In component form, this is equivalent to:
\begin{equation}
Z(X_\pi) = \sum_{i=1}^{K} a_{\pi(i)} z_{\pi(i)} = \sum_{j=1}^{K} a_j z_j = Z(X)
\end{equation}
where the change of variables $j = \pi^{-1}(i)$ is used.
\end{proof}






\subsection{Computational Complexity}
\label{app:comp_complexity}

\begin{theorem}[Computational Complexity]
The computational complexity of TAPFM for processing a single bag of $K$ instances is $\mathcal{O}(K \cdot C_{\text{ViT}})$, where $C_{\text{ViT}}$ is the complexity of a forward and backward pass through the specific ViT of a PFM for a single instance.
\end{theorem}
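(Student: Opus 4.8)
# Proof Proposal for the Computational Complexity Theorem

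The plan is to account for every operation in one iteration of Algorithm~\ref{alg:TAPFM} for a single bag and show that the dominant cost is $K$ independent ViT forward/backward passes, with all aggregation-side operations being asymptotically negligible. First I would isolate the three phases of the algorithm: (i) the PFM forward pass over $K$ tiles, (ii) the aggregator forward/backward pass, and (iii) the PFM backward pass driven by $\mathcal{L}_{PFM}$. For phase (i), note that each tile $x_i$ is processed independently through $f_\theta$ (the tiles share no cross-tile attention — attention is \emph{within} a tile, among its $N$ patch tokens plus the CLS token), so the forward cost is exactly $K \cdot C_{\text{ViT}}^{\text{fwd}}$. Extracting $\mathbf{a}$ via the averaging $a_i = \frac{1}{H}\sum_h \frac{1}{N}\sum_j A^h_{cls,j}$ costs $\mathcal{O}(HN)$ per tile, i.e. $\mathcal{O}(KHN)$ total, which is subsumed by $C_{\text{ViT}}$ since a single ViT forward pass already touches every one of the $HN$ attention entries (indeed $C_{\text{ViT}} = \Omega(N^2 D + ND^2)$ from the self-attention and MLP blocks).

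Next I would bound phase (ii). The detach operations are $\mathcal{O}(KD)$ copies. The min-max scaling and softmax on $\mathbf{a}_{detached}$ are $\mathcal{O}(K)$. Forming the bag vector $Z = \mathbf{Z}_{detached}^T \mathbf{a}_{detached} = \sum_i a_i z_i$ costs $\mathcal{O}(KD)$. The linear classifier $\hat y = \sigma(WZ+b)$ with $W \in \mathbb{R}^{1\times D}$ costs $\mathcal{O}(D)$, and the weighted cross-entropy $\mathcal{L}_{agg}$ is $\mathcal{O}(1)$. The aggregator backward pass — computing $\nabla_{\theta_{agg}}\mathcal{L}_{agg}$, the feature gradients $g_{z_i} = a_i \frac{\partial \mathcal{L}_{agg}}{\partial Z}$, and the attention gradients $g_{a_i} = \langle z_i, \frac{\partial \mathcal{L}_{agg}}{\partial Z}\rangle$ — is again $\mathcal{O}(KD)$. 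So phase (ii) is $\mathcal{O}(KD)$, dominated by $K \cdot C_{\text{ViT}}$ since $C_{\text{ViT}} = \Omega(D)$ (in fact $\Omega(D^2)$).

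For phase (iii), the loss $\mathcal{L}_{PFM} = \mathcal{L}_{feature} + \lambda \mathcal{L}_{attention}$ is assembled from the detached gradient matrices $\mathbf{G}_z^{detached}, \mathbf{g}_a^{detached}$: the feature term $-\mathrm{tr}(\mathbf{Z}\mathbf{G}_z^T) = \sum_i \langle z_i, g_{z_i}\rangle$ costs $\mathcal{O}(KD)$ to evaluate, and $\mathbf{a}^T \mathbf{g}_a$ costs $\mathcal{O}(K)$. Crucially, both terms are \emph{sums of per-tile contributions that are linear in the ViT outputs $z_i$ and $a_i$}, so backpropagating $\mathcal{L}_{PFM}$ through $f_\theta$ amounts to running the standard ViT backward pass once per tile with the appropriate seed gradient ($\partial \mathcal{L}_{PFM}/\partial z_i$ and $\partial \mathcal{L}_{PFM}/\partial a_i$, both already in hand), giving $K \cdot C_{\text{ViT}}^{\text{bwd}}$. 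Summing the three phases: total cost $= K(C_{\text{ViT}}^{\text{fwd}} + C_{\text{ViT}}^{\text{bwd}}) + \mathcal{O}(KD) = \mathcal{O}(K \cdot C_{\text{ViT}})$, absorbing $C_{\text{ViT}}^{\text{fwd}} + C_{\text{ViT}}^{\text{bwd}}$ into $C_{\text{ViT}}$ as defined in the statement.

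The main obstacle — really the only subtle point — is justifying that the PFM backward pass in phase (iii) does not incur more than $K$ single-instance ViT backward passes despite the coupling introduced by the shared bag vector $Z$. The resolution is precisely the detachment: because $\mathbf{G}_z^{detached}$ and $\mathbf{g}_a^{detached}$ are treated as constants when forming $\mathcal{L}_{PFM}$, the loss decomposes as $\sum_{i=1}^K \big(\langle z_i, g_{z_i}^{detached}\rangle + \lambda\, a_i\, g_{a_i}^{detached}\big)$, a sum over tiles with no cross-tile terms, so autodiff through $f_\theta$ is literally $K$ independent backward passes — no additional graph structure links them. I would state this decomposition explicitly and then invoke the standard fact that forward and backward passes through a fixed network have the same asymptotic cost up to a constant factor, completing the bound.
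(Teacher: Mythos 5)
Your proposal is correct and follows essentially the same route as the paper's proof: enumerate each step of Algorithm~\ref{alg:TAPFM} (forward pass, attention averaging, bag aggregation, aggregator update, loss assembly, PFM backward pass), bound each, and observe that the $K$ ViT forward/backward passes dominate all the $\mathcal{O}(KD)$ and $\mathcal{O}(KHN)$ aggregation-side terms. Your explicit justification that the detached gradients make $\mathcal{L}_{PFM}$ decompose into per-tile terms, so the backward phase is exactly $K$ independent single-instance passes, is a point the paper asserts without argument, and is a welcome addition rather than a departure.
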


\begin{proof}
The computational complexity of TAPFM is analyzed by examining each step of the proposed Algorithm~\ref{alg:TAPFM} in detail:

\begin{itemize}
    \item \textbf{Feature Extraction (Forward Pass)}:
   
   Each of the $K$ tiles in the bag is passed through the ViT model to extract features. The complexity of processing a single tile depends on the ViT architecture:
   
   - Patch embedding: $\mathcal{O}(P^2 \cdot C \cdot D)$, where $P$ is the patch size, $C$ is the number of channels, and $D$ is the embedding dimension.
   - Self-attention layers: $\mathcal{O}(L \cdot N^2 \cdot D)$, where $L$ is the number of layers, $N$ is the number of tokens (patches).
   - MLP blocks: $\mathcal{O}(L \cdot N \cdot D^2)$.
   
   The total complexity for a single tile forward pass is $\mathcal{O}(C_{\text{ViT\_forward}}) = \mathcal{O}(P^2 \cdot C \cdot D + L \cdot N^2 \cdot D + L \cdot N \cdot D^2)$.
   
   For $K$ tiles, the total forward pass complexity is $\mathcal{O}(K \cdot C_{\text{ViT\_forward}})$.

\item \textbf{Attention Weight Computation}:
   
   For each tile $i$, the average attention weight is computed by aggregating over $H$ attention heads and $N$ tokens:
   \begin{equation}
   a_i = \frac{1}{H} \sum_{h=1}^{H} \frac{1}{N} \sum_{j=1}^{N} A^h_{cls,j}
   \end{equation}
   
   This requires $\mathcal{O}(H \cdot N)$ operations per tile, resulting in a total complexity of $\mathcal{O}(K \cdot H \cdot N)$ for all tiles.

\item \textbf{Bag Representation Computation}:
   
   Computing the weighted sum of feature vectors:
   \begin{equation}
   Z = \mathbf{Z}^T\mathbf{a} = \sum_{i=1}^{K} a_i z_i
   \end{equation}
   
   This requires $\mathcal{O}(K \cdot D)$ operations, where $D$ is the feature dimension.

\item \textbf{Aggregator Forward and Backward Pass}:
   
   The aggregator applies a linear transformation followed by a sigmoid activation:
   \begin{equation}
   \hat{y} = \sigma(W Z + b)
   \end{equation}
   
   The forward pass has complexity $\mathcal{O}(D)$ for the matrix-vector multiplication.
   
   Computing the aggregator loss and its gradient with respect to the bag representation has complexity $\mathcal{O}(D)$.
   
   Updating the aggregator parameters has complexity $\mathcal{O}(D)$.

\item \textbf{Computing Gradients for the PFM}:
   
   The feature loss is:
   \begin{equation}
   \mathcal{L}_{feature} = -\text{tr}(\mathbf{Z}\mathbf{G}_z^T) = \sum_{i=1}^{K} \langle z_i, g_{z_i} \rangle
   \end{equation}
   
   Computing this loss has complexity $\mathcal{O}(K \cdot D)$.
   
   The attention loss is:
   \begin{equation}
   \mathcal{L}_{attention} = \mathbf{a}^T\mathbf{g}_a = \sum_{i=1}^{K} a_i \cdot g_{a_i}
   \end{equation}
   
   Computing this loss has complexity $\mathcal{O}(K)$.

\item \textbf{PFM Backward Pass}:
   
   The backward pass through the ViT for all $K$ tiles has a complexity of $\mathcal{O}(K \cdot C_{\text{ViT\_backward}})$, which is typically on the same order as the forward pass:
   $\mathcal{O}(C_{\text{ViT\_backward}}) = \mathcal{O}(P^2 \cdot C \cdot D + L \cdot N^2 \cdot D + L \cdot N \cdot D^2)$.
\end{itemize}
Combining all steps, the total computational complexity is:
\begin{equation}
\mathcal{O}(K \cdot C_{\text{ViT\_forward}} + K \cdot H \cdot N + K \cdot D + D + K \cdot D + K + K \cdot C_{\text{ViT\_backward}})
\end{equation}

Since $C_{\text{ViT}} = C_{\text{ViT\_forward}} + C_{\text{ViT\_backward}}$ and typically $C_{\text{ViT}} \gg H \cdot N$ and $C_{\text{ViT}} \gg D$, the overall complexity is dominated by the ViT forward and backward passes, resulting in $\mathcal{O}(K \cdot C_{\text{ViT}})$.

This complexity analysis shows that the proposed TAPFM approach scales linearly with the number of instances in the bag. The constant factor $C_{\text{ViT}}$ depends on the specific architecture of the Vision Transformer but is independent of the bag size.
\end{proof}

\subsection{Space Complexity Analysis}
\label{app:spacecomplexity}
In addition to time complexity, space complexity is a critical consideration for practical deployment of TAPFM, especially for gigapixel WSIs processed using transformer-based models. 

\begin{theorem}[Space Complexity]
The space complexity of TAPFM for processing a single bag of $K$ instances (tiles) is $\mathcal{O}(|\theta_{PFM}| + K \cdot (S_{act} + S_{grad}))$, where $|\theta_{PFM}|$ is the size of the PFM parameters, and $S_{act}$ and $S_{grad}$ are the memory requirements for activations and gradients per instance, respectively.
\end{theorem}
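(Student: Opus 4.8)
The plan is to tally every tensor that must be simultaneously resident in memory during one iteration of Algorithm~\ref{alg:TAPFM} on a single bag $X_b$ of $K$ tiles, bound the size of each, and then show that every contribution except $\mathcal{O}(|\theta_{PFM}|)$ and $\mathcal{O}(K\cdot(S_{act}+S_{grad}))$ is asymptotically absorbed. I would organize the bookkeeping along the three phases of the algorithm: (i) the PFM forward pass, (ii) the aggregator update on the detached graph, and (iii) the PFM backward pass on $\mathcal{L}_{PFM}$.

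For phase (i), storing the PFM weights costs $\mathcal{O}(|\theta_{PFM}|)$, and since AdamW maintains first- and second-moment buffers the optimizer state adds another $\mathcal{O}(|\theta_{PFM}|)$, still $\mathcal{O}(|\theta_{PFM}|)$; the input batch $\mathbf{X}\in\mathbb{R}^{H\times W\times C\times K}$ contributes $\mathcal{O}(K\cdot HWC)$. The single forward evaluation $(\mathbf{Z},\mathbf{A})=f_\theta(\mathbf{X})$ over all $K$ tiles must retain the intermediate activations of every tile's \vit{} subgraph, because as defined in Section~\ref{sec:finetuning} the losses $\mathcal{L}_{feature}=-\text{tr}(\mathbf{Z}\mathbf{G}_z^\top)$ and $\mathcal{L}_{attention}=\mathbf{a}^\top\mathbf{g}_a$ are differentiated back through $\mathbf{Z}$ and $\mathbf{a}$ in phase (iii); writing $S_{act}$ for the per-tile activation footprint, this term is $\mathcal{O}(K\cdot S_{act})$, which already dominates $\mathcal{O}(K\cdot HWC)$.

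For phase (ii), detaching yields $\mathbf{Z}_{detached}\in\mathbb{R}^{K\times D}$, $\mathbf{a}_{detached}\in\mathbb{R}^{K}$, the bag vector $Z\in\mathbb{R}^{D}$, the aggregator parameters $\theta_{agg}=\{W,b\}$ with their gradients and moment buffers, and the scalar $\mathcal{L}_{agg}$, costing $\mathcal{O}(KD)$, $\mathcal{O}(D)$ and $\mathcal{O}(1)$ respectively; the subsequently formed $\mathbf{G}_z\in\mathbb{R}^{K\times D}$ and $\mathbf{g}_a\in\mathbb{R}^{K}$ add a further $\mathcal{O}(KD)$. For phase (iii), backpropagating $\mathcal{L}_{PFM}=\mathcal{L}_{feature}+\lambda\mathcal{L}_{attention}$ through the retained \vit{} subgraphs allocates a per-tile activation-gradient buffer of size $S_{grad}$, hence $\mathcal{O}(K\cdot S_{grad})$, plus one parameter-gradient buffer of size $\mathcal{O}(|\theta_{PFM}|)$. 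Summing the three phases and using $D=\mathcal{O}(S_{act})$ (each tile's CLS embedding is one of the activations already counted) and $|\theta_{agg}|=\mathcal{O}(D)$ to collapse the lower-order terms gives $\mathcal{O}(|\theta_{PFM}|+K\cdot(S_{act}+S_{grad}))$.

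The main obstacle is justifying the multiplicative factor $K$ on $S_{act}$ rather than a bag-independent constant: one must argue that the detached two-graph construction does not permit streaming the tiles one at a time with gradient accumulation, since $\mathcal{L}_{feature}$ and $\mathcal{L}_{attention}$ are coupled across the whole bag through $\mathbf{Z}$, $\mathbf{a}$, $\mathbf{G}_z$ and $\mathbf{g}_a$ (all functions of the bag representation $Z=\mathbf{Z}^\top\mathbf{a}$), so all $K$ per-tile forward subgraphs must coexist until the phase-(iii) backward pass completes. A secondary point, worth only a remark because the statement treats $S_{act}$ as a black box, is that $S_{act}$ itself hides an $\mathcal{O}(L(N^2H+ND))$ dependence on the token count $N$, which is the source of the quadratic-in-patches memory growth flagged in the implementation details and motivates the $224\times224$ tile choice.
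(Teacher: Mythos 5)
Your proof is correct and follows essentially the same route as the paper's: an itemized accounting of the tensors resident in memory, in which the PFM parameters and the per-tile activation/gradient buffers dominate while the $\mathcal{O}(KD)$ feature and gradient matrices, the attention vectors, and the aggregator are absorbed as lower-order terms; your phase-by-phase organization and your inclusion of the optimizer state and the input tensor are harmless refinements the paper omits. One small caveat: your claim that tile streaming is impossible because $\mathcal{L}_{feature}$ and $\mathcal{L}_{attention}$ are coupled across the bag overreaches --- once $\mathbf{G}_z$ and $\mathbf{g}_a$ are detached constants, $\mathcal{L}_{PFM}$ decomposes additively over tiles and could in principle be accumulated with per-tile recomputation at the cost of a second forward pass; but since the theorem is an upper bound on the algorithm as written (a single retained forward graph over all $K$ tiles), this does not affect the validity of your argument.
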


\begin{proof}
The space complexity of TAPFM is analyzed by examining the memory requirements of each component in the proposed approach as follows:

\begin{itemize}
    \item \textbf{Model Parameter Storage}: The memory required to store the \vit parameters is $\mathcal{O}(|\theta_{PFM}|)$, which can be further decomposed as $\mathcal{O}(L \cdot N \cdot D^2)$, where $L$ is the number of transformer layers, $N$ is the number of tokens, and $D$ is the embedding dimension. 
    
    
    \item \textbf{Feature Matrix Storage}: For a bag of $K$ instances, the feature matrix $\mathbf{Z} \in \mathbb{R}^{K \times D}$ requires $\mathcal{O}(K \cdot D)$ memory.
    
    \item \textbf{Attention Vector Storage}: The attention weight vector $\mathbf{a} \in \mathbb{R}^K$ requires $\mathcal{O}(K)$ memory.
    
    \item \textbf{Activation Memory}: During the forward pass, each instance requires storing intermediate activations for all transformer layers. For a single instance, this requires $\mathcal{O}(L \cdot N \cdot D)$ memory for hidden states and $\mathcal{O}(L \cdot H \cdot N^2)$ for attention maps, where $H$ is the number of attention heads. Let's denote the total activation memory per instance as $S_{act} = \mathcal{O}(L \cdot N \cdot D + L \cdot H \cdot N^2)$.
    
    \item \textbf{Gradient Memory}: During backpropagation, gradients for both activations and parameters must be stored. The memory requirement for gradients per instance is denoted as $S_{grad}$, which is typically on the same order as $S_{act}$.
    
    \item \textbf{Aggregator Memory}: The aggregator requires $\mathcal{O}(D)$ memory for parameters and $\mathcal{O}(1)$ for the output, which is negligible compared to the PFM memory requirements.
    
    \item \textbf{Gradient Matrices for Feature and Attention}: The feature gradient matrix $\mathbf{G}_z \in \mathbb{R}^{K \times D}$ requires $\mathcal{O}(K \cdot D)$ memory, and the attention gradient vector $\mathbf{g}_a \in \mathbb{R}^K$ requires $\mathcal{O}(K)$ memory.
\end{itemize}

Combining all components and identifying the dominant terms, the total space complexity is:
\begin{equation}
\mathcal{O}(|\theta_{PFM}| + K \cdot D + K + K \cdot S_{act} + K \cdot S_{grad} + K \cdot D + K)
\end{equation}

Since $S_{act}$ and $S_{grad}$ are typically much larger than $D$, and combining like terms produces:
\begin{equation}
\mathcal{O}(|\theta_{PFM}| + K \cdot (S_{act} + S_{grad}))
\end{equation}

Above equation indicates that the memory requirements of TAPFM scale linearly with the number of instances (tiles) $K$. However, the constant factors $S_{act}$ and $S_{grad}$ can be substantial for large \vit models, potentially limiting the number of instances that can be processed simultaneously on a single GPU.
\end{proof}


    
    
    


\subsection{Task-Specific Adaptation without Catastrophic Forgetting}
\label{app:catastrophic_forgetting}

\begin{proposition}[Preservation of Pretrained Knowledge]
The proposed TAPFM approach effectively implements a specialized form of continual learning that prevents catastrophic forgetting. The Fisher information matrix for pretrained parameters is implicitly preserved:
\begin{equation}
F_{\theta_{PFM}} = \mathbb{E}_{p(x,y)}[\nabla_{\theta_{PFM}} \log p(z|x;\theta_{PFM}) \nabla_{\theta_{PFM}} \log p(z|x;\theta_{PFM})^T]
\end{equation}
\end{proposition}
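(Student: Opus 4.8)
The plan is to formalize the intuition that TAPFM's detached, deliberately low-learning-rate update to $\theta_{PFM}$ confines the foundation model to a small neighbourhood of its pretrained optimum $\theta_{PFM}^{\ast}$, and that inside such a neighbourhood the pretraining behaviour — quantified by the Fisher information $F_{\theta_{PFM}}$ — changes only to second order, so the ``knowledge'' encoded in that curvature is preserved. Let $p(z\mid x;\theta_{PFM})$ be the distribution over CLS features induced by the self-supervised pretraining objective and let $\theta_{PFM}^{\ast}$ be an (approximate) optimum. The first step is to invoke the standard second-order identity $D_{\mathrm{KL}}\!\left(p(\cdot\mid x;\theta_{PFM}^{\ast})\,\|\,p(\cdot\mid x;\theta_{PFM}^{\ast}+\Delta)\right) = \tfrac12\,\Delta^{\top}F_{\theta_{PFM}^{\ast}}\Delta + \mathcal{O}(\|\Delta\|^{3})$, together with the identification of $F_{\theta_{PFM}^{\ast}}$ with the Hessian of the negative pretraining log-likelihood at a stationary point, so that a bound on the cumulative displacement $\Delta\theta_{PFM} = \theta_{PFM}^{(T)} - \theta_{PFM}^{\ast}$ immediately controls both how far the adapted feature distribution drifts and — after one more differentiation — how much $F_{\theta_{PFM}}$ itself can move.

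The second step bounds the per-iteration PFM update. From Section~\ref{sec:finetuning}, $\nabla_{\theta_{PFM}}\mathcal{L}_{feature} = \sum_{i=1}^{K} J_i^{\top} g_{z_i}$ with $J_i = \partial z_i/\partial\theta_{PFM}$ and $g_{z_i} = a_i\,\partial\mathcal{L}_{agg}/\partial Z$, and $\nabla_{\theta_{PFM}}\mathcal{L}_{attention}$ is likewise a linear functional of the same vector $\partial\mathcal{L}_{agg}/\partial Z$ acting through the attention Jacobian. Hence the entire Stage-2 gradient satisfies $\|\nabla_{\theta_{PFM}}\mathcal{L}_{PFM}\| \le \big(\sum_i a_i\|J_i\| + \lambda\,C_a\big)\,\|\partial\mathcal{L}_{agg}/\partial Z\|$, where $C_a$ bounds the attention-Jacobian norm. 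Because the detachment of $\theta_{agg}$ (the mechanism behind the gradient-stabilization proposition) makes this gradient a \emph{fixed} linear functional of the single quantity $\partial\mathcal{L}_{agg}/\partial Z$ rather than a moving target, and because Stage 1 drives the aggregator toward a stationary point where $\partial\mathcal{L}_{agg}/\partial Z$ becomes small, a geometric/summability argument on the sequence of aggregator residuals yields $\|\Delta\theta_{PFM}\| \le \eta_{PFM}\sum_{t}\|\nabla_{\theta_{PFM}}\mathcal{L}_{PFM}\|_{t} = \mathcal{O}(\eta_{PFM})$.

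The third step combines the two: substituting the $\mathcal{O}(\eta_{PFM})$ displacement into the KL expansion shows the adapted feature distribution stays within $\mathcal{O}(\eta_{PFM}^{2})$ KL of the pretrained one, so no direction of large Fisher curvature is traversed appreciably — precisely the constraint that EWC enforces \emph{explicitly} through a penalty $\sum_{j} F_{j}(\theta_{j}-\theta_{j}^{\ast})^{2}$, obtained here \emph{implicitly} because the update is simultaneously tiny and confined to the task-relevant subspace spanned by $\{J_i^{\top}\,\partial\mathcal{L}_{agg}/\partial Z\}$. Differentiating the log-likelihood identity once more then gives $\|F_{\theta_{PFM}^{(T)}} - F_{\theta_{PFM}^{\ast}}\| = \mathcal{O}(\eta_{PFM})$, which is the claimed preservation.

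The main obstacle I anticipate is that the statement is phrased heuristically: ``implicitly preserved'' has no a priori formal meaning, and the genuinely load-bearing hypothesis — that $\partial\mathcal{L}_{agg}/\partial Z$ is small, or at least summable, over the course of training — is not guaranteed, since the task need not be perfectly learnable and mini-batch gradients are noisy. The honest form of the result is therefore conditional: \emph{provided the Stage-1 updates render the sequence $\{\|\partial\mathcal{L}_{agg}/\partial Z\|_t\}$ summable, the cumulative PFM drift, the KL divergence from the pretrained feature distribution, and the change in $F_{\theta_{PFM}}$ are all $\mathcal{O}(\eta_{PFM})$}. A secondary technical point is that the Fisher--Hessian identification requires $\theta_{PFM}^{\ast}$ to be a true stationary point of the pretraining loss, which holds only approximately for a real PFM; I would carry an explicit residual term $\|\nabla\mathcal{L}_{\mathrm{pretrain}}(\theta_{PFM}^{\ast})\|$ through all the bounds rather than assume it away.
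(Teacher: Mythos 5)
Your argument takes a genuinely different route from the paper's. The paper's proof works by analogy to natural gradient descent: it asserts that gradient detachment acts as a ``filtered signal'' equivalent to an implicit regularizer $\mathcal{R}(\theta_{PFM}-\theta_{PFM}^{pretrained})$, writes the resulting update as $\Delta\theta_{PFM}\approx -\eta(I+\lambda F_{\theta_{PFM}})^{-1}\nabla_{\theta_{PFM}}\mathcal{L}_{task}$, and argues that for large $\lambda$ this tends to the Fisher-preconditioned (natural gradient) update that provably respects pretrained knowledge. You instead bound the \emph{total parameter displacement} $\|\Delta\theta_{PFM}\|=\mathcal{O}(\eta_{PFM})$ via the structure of the Stage-2 gradient as a linear functional of $\partial\mathcal{L}_{agg}/\partial Z$, then push that displacement through the second-order identity $D_{\mathrm{KL}}=\tfrac12\Delta^{\top}F\Delta+\mathcal{O}(\|\Delta\|^{3})$ to control drift of the feature distribution and of $F_{\theta_{PFM}}$ itself. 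Neither argument is rigorous as stated --- the paper never justifies why detachment produces a regularizer whose curvature is the Fisher matrix specifically (this is the load-bearing and unproven step of its proof), while your argument rests on the summability of $\{\|\partial\mathcal{L}_{agg}/\partial Z\|_t\}$, which you correctly flag as an unverified hypothesis (without it, drift over $T$ steps is $\mathcal{O}(\eta_{PFM}T)$, not $\mathcal{O}(\eta_{PFM})$). What your route buys is an explicit, conditional, quantitative statement with clearly isolated assumptions (plus the stationarity residual you carry); what the paper's route buys is a direct mechanistic link between the detachment operation and Fisher-weighted updates, i.e., it attempts to explain \emph{why} detachment matters rather than merely that the updates are small --- a distinction your proof loses, since your displacement bound would apply equally to ordinary small-learning-rate end-to-end fine-tuning. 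One further technical point you should add: your final step, differentiating the KL identity to conclude $\|F_{\theta_{PFM}^{(T)}}-F_{\theta_{PFM}^{\ast}}\|=\mathcal{O}(\eta_{PFM})$, implicitly requires a Lipschitz bound on the Hessian of the log-likelihood (third-order smoothness), which should be stated as an explicit assumption alongside the summability condition.
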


\begin{proof}
Catastrophic forgetting occurs when fine-tuning a pre-trained model on a new task destroys the knowledge acquired during pretraining. In information-theoretic terms, this happens when parameter updates move away from regions of high Fisher information with respect to the pretraining task.

The Fisher information matrix $F_{\theta_{PFM}}$ characterizes the curvature of the loss landscape around the pretrained parameters. It quantifies how sensitive the model's output is to small changes in each parameter. Parameters with high Fisher information are crucial for the model's performance on the pretraining task.

In standard fine-tuning, the gradient update is:
\begin{equation}
\Delta \theta_{PFM} = -\eta \nabla_{\theta_{PFM}} \mathcal{L}_{task}
\end{equation}

This update does not account for the importance of parameters for the pretraining task, potentially leading to catastrophic forgetting. In contrast, optimal updates for preserving pretrained knowledge while adapting to a new task would be of the form:
\begin{equation}
\Delta \theta_{PFM} = -\eta F_{\theta_{PFM}}^{-1} \nabla_{\theta_{PFM}} \mathcal{L}_{task}
\end{equation}

This is a form of natural gradient descent, which adapts the update direction based on the curvature of the loss landscape.

The proposed TAPFM approach implicitly approximates this behavior through gradient detachment. By separating the aggregator and PFM optimization, TAPFM allows the PFM to adapt more conservatively. The detached gradients from the aggregator act as a filtered signal that guides the PFM to adapt while respecting its pretrained structure.

Specifically, the proposed PFM update is:
\begin{equation}
\Delta \theta_{PFM} = -\eta \nabla_{\theta_{PFM}} \mathcal{L}_{PFM}
\end{equation}

where $\mathcal{L}_{PFM}$ is defined using detached gradients from the aggregator. This approach resembles a regularized optimization problem:
\begin{equation}
\min_{\theta_{PFM}} \mathcal{L}_{task}(\theta_{PFM}) + \lambda \cdot \mathcal{R}(\theta_{PFM} - \theta_{PFM}^{pretrained})
\end{equation}

where $\mathcal{R}$ is an implicit regularization term that penalizes deviations from the pretrained parameters. The solution to this regularized problem can be approximated as:
\begin{equation}
\Delta \theta_{PFM} \approx -\eta (I + \lambda \cdot F_{\theta_{PFM}})^{-1} \nabla_{\theta_{PFM}} \mathcal{L}_{task}
\end{equation}

As $\lambda$ increases, this approaches the natural gradient update:
\begin{equation}
\Delta \theta_{PFM} \approx -\eta \lambda^{-1} F_{\theta_{PFM}}^{-1} \nabla_{\theta_{PFM}} \mathcal{L}_{task} \approx -\eta' F_{\theta_{PFM}}^{-1} \nabla_{\theta_{PFM}} \mathcal{L}_{task}
\end{equation}

where $\eta' = \eta \lambda^{-1}$ is an effective learning rate.

The detached gradient approach of TAPFM serves as an implicit regularization mechanism that enables task-specific adaptation while preserving the foundational knowledge embedded in the pretrained model. This balance between adaptation and preservation is especially important for PFMs, where pretraining captures general tissue morphology and fine-tuning adapts to specific clinically relevant tasks.
\end{proof}

\subsection{Cosine Regularization for Feature Alignment Loss}
\label{app:cosine_regularization}
The feature alignment loss in equation~\ref{eq:featureloss} encourages feature vectors to move in the direction that reduces the classification loss. However, in certain scenarios, direct optimization of this loss may lead to updates where feature vectors and gradients point in opposing directions, potentially causing unstable training dynamics. To mitigate this issue, a cosine regularization term is considered. The cosine regularization term is defined as: $\mathcal{L}_{reg} = \sum_{i=1}^{K} (1 - \cos(z_i, g_{z_i}))$
where $\cos(z_i, g_{z_i})$
represents the cosine similarity between the feature vector $z_i$
and its gradient $g_{z_i}$, calculated as:
\begin{equation}
\cos(z_i, g_{z_i}) = \frac{\langle z_i, g_{z_i} \rangle}{\|z_i\|_2 \cdot \|g_{z_i}\|_2}
\end{equation}

This regularization term penalizes scenarios where feature vectors and their gradients are oriented in opposite directions (negative cosine similarity). When incorporated into the task adaptation loss, the complete objective becomes:
\begin{equation}
\mathcal{L}_{PFM} = \mathcal{L}_{feature} + \lambda \mathcal{L}_{attention} + \beta \mathcal{L}_{reg}
\end{equation}
where $\beta$ is a hyperparameter controlling the strength of the regularization. The purpose of this regularization is to promote more stable optimization by encouraging gradual updates to the learned representations while still allowing adaptation to the downstream task. Theoretically, this prevents drastic changes in feature space that might compromise the pretrained knowledge while ensuring that updates remain aligned with the classification objective.


\section{Training Loss Curves}

\begin{figure}[!htb]
    \centering
    
    \begin{subfigure}[b]{0.45\textwidth}
        \centering
        \includegraphics[width=\textwidth, keepaspectratio]{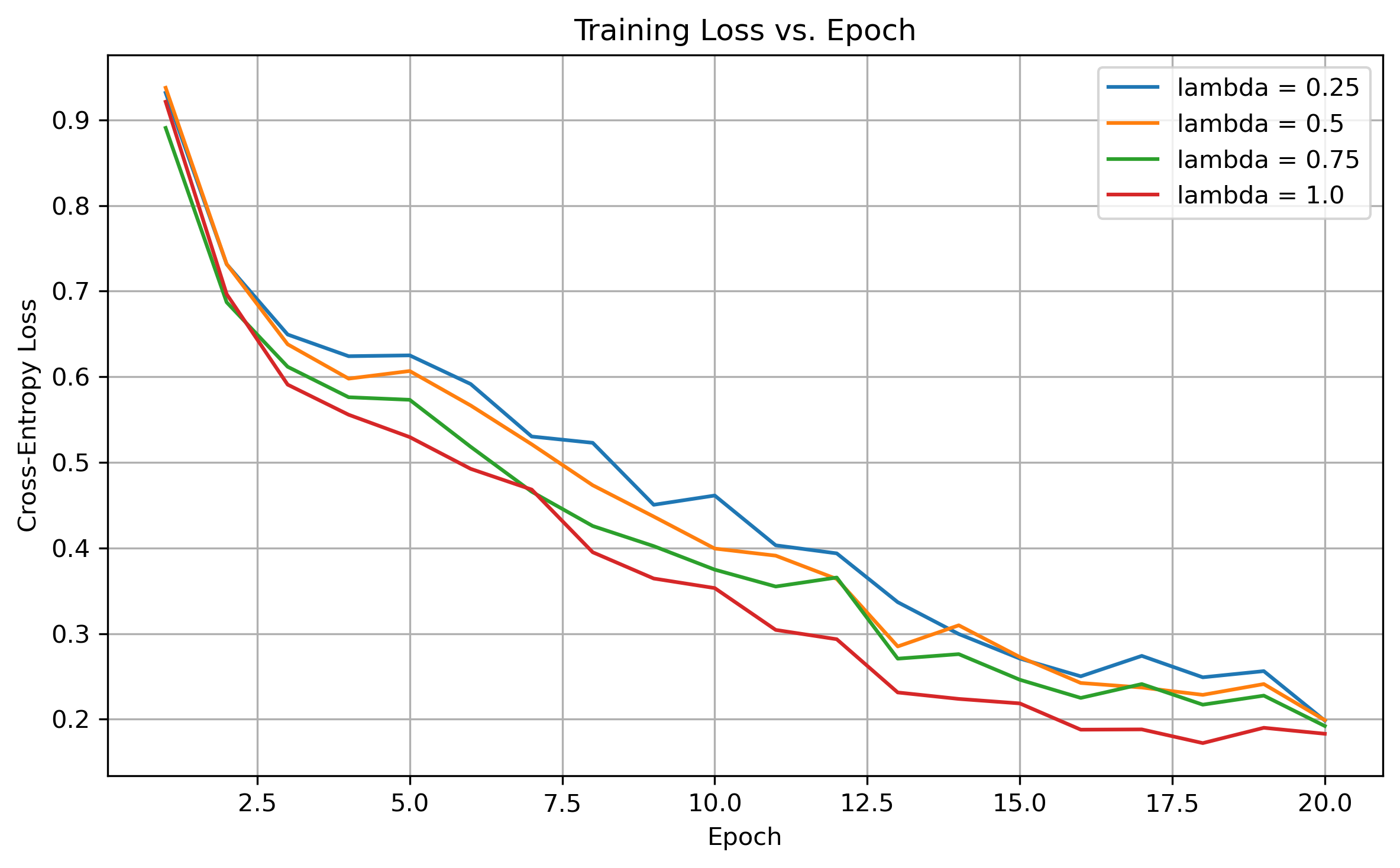}
        \caption{Training loss variation with respect to lambda in TAL (equation~\ref{eq:pfmloss}) for H-Optimus-0 (TAPFM) model.}
        \label{fig:fgfr3_lambda_loss}
    \end{subfigure}
    \hfill 
    \begin{subfigure}[b]{0.45\textwidth}
        \centering
        \includegraphics[width=\textwidth, keepaspectratio]{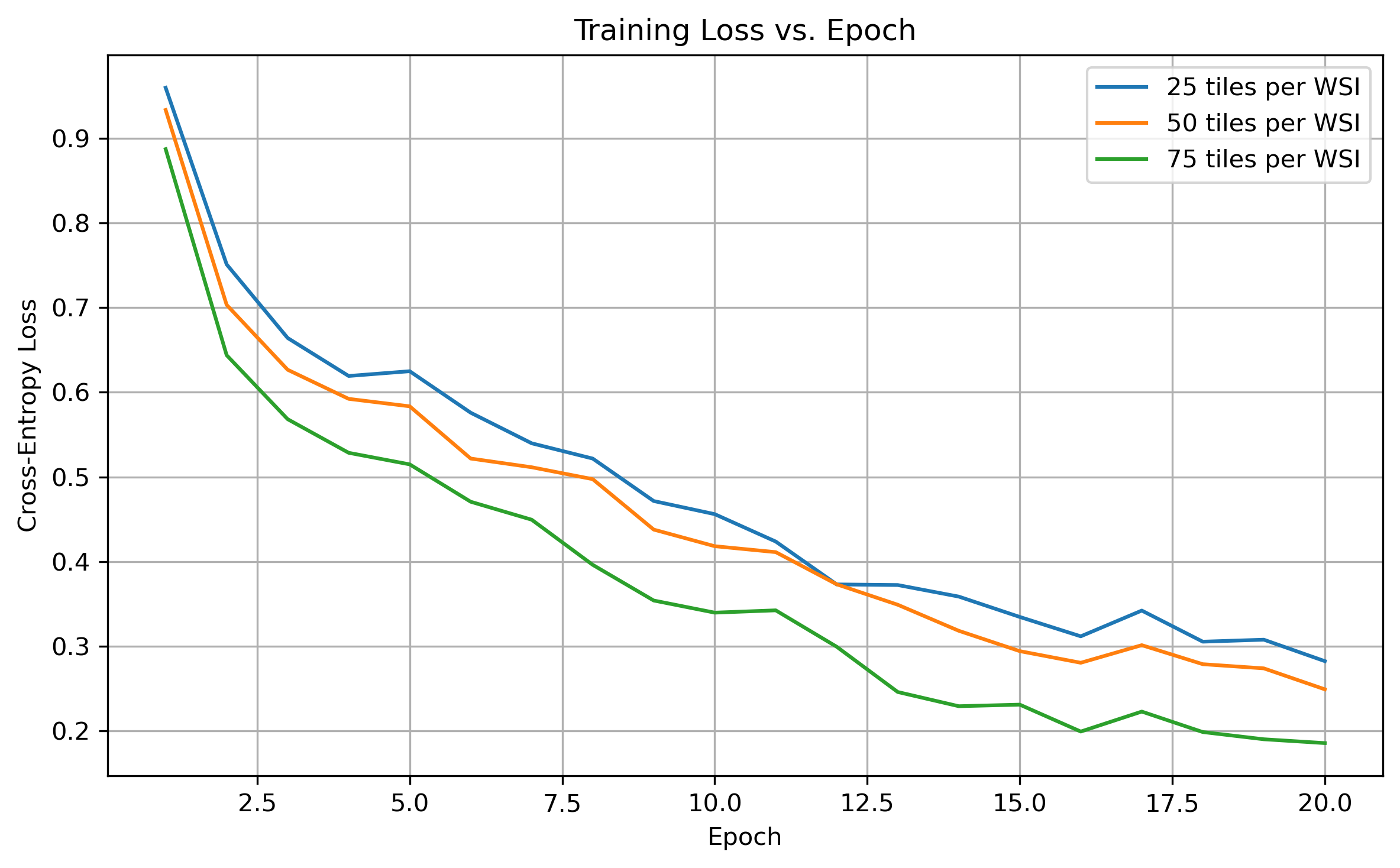}
        \caption{Training loss variation with the number of tiles for H-Optimus-0 (TAPFM) model.}
        \label{fig:fgfr3_ntiles_loss}
    \end{subfigure}
    \caption{Ablations studies for FGFR3 prediction in BLCA patients.}
    \label{fig:ablation_raw}
\end{figure}

\end{document}